\newtheorem{lemma}{Lemma}
\newtheorem{theorem}{Theorem}
\newtheorem{corollary}{Corollary}
\newtheorem{definition}{Definition}
\newtheorem{assumption}{Assumption}
\newtheorem{remark}{Remark}
\theoremstyle{definition}
\icmltitlerunning{Density Level Set Estimation on Manifolds with DBSCAN}
\begin{document} 

\twocolumn[
\icmltitle{Density Level Set Estimation on Manifolds with DBSCAN}

% It is OKAY to include author information, even for blind
% submissions: the style file will automatically remove it for you
% unless you've provided the [accepted] option to the icml2016
% package.

\begin{icmlauthorlist}
\icmlauthor{Heinrich Jiang}{a}
\end{icmlauthorlist}
\icmlaffiliation{a}{Google}
\icmlcorrespondingauthor{Heinrich Jiang}{heinrich.jiang@gmail.com}

% You may provide any keywords that you 
% find helpful for describing your paper; these are used to populate 
% the "keywords" metadata in the PDF but will not be shown in the document
\icmlkeywords{}

\vskip 0.3in
]
\printAffiliationsAndNotice{}

\begin{abstract} 
We show that DBSCAN can estimate the connected
components of the $\lambda$-density level set $\{ x : f(x) \ge \lambda\}$ given $n$ i.i.d. samples from an unknown density $f$.
We characterize the regularity of the level set boundaries using parameter $\beta > 0$ and analyze the estimation error under the Hausdorff metric.
When the data lies in $\mathbb{R}^D$ we obtain a rate of $\widetilde{O}(n^{-1/(2\beta + D)})$, which matches known lower bounds
up to logarithmic factors. When the data lies on an embedded unknown $d$-dimensional manifold in $\mathbb{R}^D$, then we obtain a rate of
$\widetilde{O}(n^{-1/(2\beta + d\cdot \max\{1, \beta \})})$.
Finally, we provide adaptive parameter tuning in order to attain these rates with no a priori knowledge
of the intrinsic dimension, density, or $\beta$.
\end{abstract} 

\section{Introduction}

DBSCAN \cite{DBSCAN} is one of the most popular clustering algorithms amongst practitioners and has 
had profound success in a wide range of data analysis applications. 
However, despite this, its statistical properties have not been fully understood.
The goal of this work is to give a theoretical analysis of the procedure 
and to the best of our knowledge, provide the first analysis of density level-set estimation
on manifolds. We also contribute ideas to related areas that may be of independent interest.

DBSCAN aims at discovering clusters which turn out to be the high-density regions of the 
dataset. It takes in two hyperparameters: minPts and $\varepsilon$. It defines a point as a 
\textit{core-point} if there are at least minPts sample points in its $\varepsilon$-radius neighborhood. 
The points within the $\varepsilon$-radius neighborhood of a core-point are said to be
 $\textit{directly reachable}$ from that core-point. Then, a point $q$ is $\textit{reachable}$ from a core-point $p$ if there
 exists a path from $q$ to $p$ where each point is directly reachable from the next point. It is now clear that 
 this definition of reachable gives a partitioning of the dataset (and remaining points not reachable from any core-point are considered noise).
 This partitioning is the clustering that is returned by DBSCAN. 

The problem of analyzing DBSCAN has recently been explored in \cite{dbscanConsistency}.
Their analysis is for a modified version of DBSCAN and is not focused on estimating a fixed density level. 
Their results have many desirable properties,
but are not immediately applicable for what this paper tries to address. Using recent developments in topological data analysis along with some tools we
develop in this paper, we show that
it is now possible to analyze the original procedure.

The clusters DBSCAN aims at discovering can be viewed as approximations of the connected components of 
the level sets $\{x : f(x) \ge \lambda \}$ where $f$ is the density and $\lambda$ is some density level.
We provide the first comprehensive analysis in tuning minPts and $\varepsilon$
to estimate the density level set for a particular level. Here, the density level $\lambda$ is known to the algorithm while the density remains unknown.
Density level set estimation has been studied
extensively. e.g., \citep{carmichael,hartigan, polonik95, cuevas, walther97, tysbakovMinimax, baillo, cadre, willet, biau, RV09, maier09, adaptive, generalizedDensity, S11, rinaldo12, S15, chen16, jiang17}.
However approaches that obtain state-of-art consistency results are largely unpractical (i.e. unimplementable).
Our work shows that in actuality, DBSCAN, a procedure known for decades and has since been used widely,
can achieve the strongest known results. Also, unlike much of the existing work,
we show that DBSCAN can also recover the connected components of the level sets separately and bijectively.

Our work begins with the insight that DBSCAN behaves like an $\varepsilon$-neighborhood graph, 
which is different from, but related to the $k$-nearest neighbor graph. The latter
has been heavily used for cluster-tree estimation \cite{CD10, stuetzle10, KV11, CDKvL14, jiang2017modal} and in this paper we adapt some of 
these ideas for $\varepsilon$-neighborhood graphs.

Cluster-tree estimation aims at discovering the hierarchical tree structure of the connected-components as the levels vary.
\citet{balakrishnan2013cluster} extends results by \citet{CD10} to the setting where the data lies
on a lower dimensional manifold and provide consistency results depending on the lower dimension and
independent of the ambient dimension. Here we are instead interested in how to set minPts and $\varepsilon$ in order to estimate a particular 
level and provide rates on the Hausdorff distance error. This is different from works on cluster tree estimation which
focuses on how to recover the tree structure rather than recovering a particular level.
In that regard, we also require density estimation bounds in order to get a handle on the true density-levels and the empirical ones. 

\citet{optimalknn} gives us optimal high-probability finite-sample $k$-NN density estimation bounds which hold {\it uniformly}; this is key to
obtaining optimal level-set estimation rates under the Hausdorff error. Much of the previous works on density level-set estimation, e.g. \citep{RV09} provide rates under risk measures such as symmetric set difference. These metrics are considerably weaker than the Hausdorff metric; the latter is a uniform guarantee. There are such bounds for the histogram density estimator.
This allowed \citet{adaptive} to obtain optimal rates under Hausdorff metric, while 
having a fully adaptive procedure. This was a significant breakthrough for level set estimation, as discussed by \citet{chazal15}.
We believe this to be the strongest consistency results obtained thus far. However, a downside is that
the histogram density estimator has little practical value. 
Here, aided with the desired bounds on the $k$-NN density estimator, we can actually obtain similar results to \citet{adaptive} but with the clearly practical DBSCAN.

We extend the $k$-NN density estimation results of \citet{optimalknn} to the manifold case, as the bulk our analysis is about the more general case that
the data lies on a manifold. Density-based procedures perform
poorly in high-dimensions since the number of samples required increases exponentially in the dimension-- the so called curse of dimensionality.
Thus, the consequences of handling the manifold case are of practical significance. Since the estimation rates we obtain depend only on 
the intrinsic dimension, it explains why DBSCAN can do well in high dimensions if the
data has low intrinsic dimension (i.e. the manifold hypothesis). Given the modern capacity of systems to collect data of increasing complexity,
it has become ever more important to understand the feasibility of {\it practical} algorithms in high dimensions.

To analyze DBSCAN, we write minPts and $\varepsilon$ in terms of the $d$, unknown manfold dimension; $k$, which controls the density estimator; and $\lambda$, which determines which level to estimate. We assume knowledge of $\lambda$ with the goal of estimating the $\lambda$-level set of the density. We give a range of $k$ in terms of $n$ and corresponding consistency guarantees and estimation rates for such choices. 
We then adaptively tune $d$ and $k$ in order to attain close to optimal performance with no a priori knowledge of the distribution.
Adaptivity is highly desirable because it allows for automatic tuning of the hyper-parameters, which is a core tenet of unsupervised learning.
To solve for the unknown dimension, we use an estimator from \citet{intrinsicKnnNew}, which we show to have considerably better finite-sample behavior than previously thought. More details and discussion of related works is in the main text. We then provide a new method of choosing $k$ such that it will asymptotically approach a value that provides near-optimal level set estimation rates.

\section{Overview}
We start by analyzing the procedure under the manifold assumption. The end of the paper will discuss the full-dimensional setting.
The bulk of our contribution lies in analyzing the former situation, while the analysis of the latter uses a subset of those techniques.
\vspace{-0.3cm}
\begin{itemize}
	\setlength\itemsep{0.0em}
\item Section~\ref{dbscan-knn} proves that the clusters returned by DBSCAN are close to the connected components
of certain $\varepsilon$-neighborhood graphs (Lemma~\ref{dbscan}). This is significant because these graphs can be shown
to estimate density level sets.

\item Section~\ref{manifoldsetting-section} introduces the manifold setting and provides supporting results including
$k$-nearest neighbor density estimation bounds
(Lemma~\ref{fk_upper_bound} and Lemma~\ref{fk_lower_bound}) that are useful later on.

\item Section~\ref{consistency-section} provides a range of parameter settings under which for each true cluster, there exists a
corresponding cluster
returned by DBSCAN (Lemma~\ref{separation} and Lemma~\ref{connectedness}), and a rate for the Hausdorff distance
between them (Theorem~\ref{hausdorfferror}). 

\item Section~\ref{pruning-section} shows how one can apply DBSCAN a second time to remove false clusters 
from the first application, thus completing a bijection between the estimates and the true clusters (Theorem~\ref{pruning}).

\item Section~\ref{adaptive-section} explains how to adaptively tune the parameters so that they fall within the theoretical ranges.
The main contributions of this section are a stronger
result about a known $k$-nearest neighbor based approach to estimating the unknown dimension (Theorem~\ref{estimating_d})
and a new way to tune $k$ to approach an optimal choice of $k$ (Theorem~\ref{estimating_alpha}).

\item Section~\ref{fulldimensionalsetting-section} gives the result
when the data lives in $\mathbb{R}^D$ without the manifold assumption.

\end{itemize}

\section{The connection to neighborhood graphs} \label{dbscan-knn}

This section is dedicated towards the understanding of the clusters produced by DBSCAN.
The algorithm can be found in~\cite{DBSCAN} and is not shown here since Lemma~\ref{lemma:dbscan} characterizes what 
DBSCAN returns.

We have $n$ i.i.d. samples $X = \{x_1,...,x_n\}$ drawn from a distribution $\mathcal{F}$ over $\mathbb{R}^D$.
\begin{definition}
\label{def:knn}
Define the $k$-NN radius of $x\in \mathbb{R}^D$ as 
\begin{align*}
r_k(x) := \inf\{r > 0: |X \cap B(x, r)| \ge k \},
\end{align*} where $B(x, r)$ denotes the Euclidean ball of radius $r$ centered at $x$.
 Let $G(k, \varepsilon)$ denote the $\varepsilon$-neighborhood level graph of $X$ with vertices $\{ x \in X : r_k(x) \le \varepsilon \}$ and an edge between $x$ and $x'$ iff $||x - x'|| \le \varepsilon$. 
 \end{definition}
 \begin{remark}
This is slightly different from $\varepsilon$-neighborhood graph, which includes all vertices. Here we exclude vertices below certain empirical density level (i.e. $r_k(x) > \varepsilon$).
 \end{remark}
 The next definition is relevant to DBSCAN and is from \cite{DBSCAN} but in the notation of Definition~\ref{def:knn}.
 \begin{definition} The following is with respect to fixed $\varepsilon > 0$ and $\text{minPts} \in \mathbb{N}$. 
 \vspace{-0.3cm}
 \begin{itemize}
 	\setlength\itemsep{0.0em}
 \item $p$ is a core-point if $r_{\text{minPts}}(p) \le \varepsilon$.
 \item $q$ is directly density-reachable from $p$ if $|p-q| \le \varepsilon$ and $p$ is a core-point.
 \item $q$ is density-reachable from $p$ if there exists a sequence $q = p_1,p_2,...,p_m = p$ such that $p_{i}$ is directly density-reachable from $p_{i+1}$ for $i = 1,..,m-1$.
\end{itemize}
 \end{definition}

 The following result is paraphrased from Lemmas 1 and 2 from \cite{DBSCAN}, which characterizes the clusters learned by DBSCAN. 
 \begin{lemma} \cite{DBSCAN} \label{lemma:dbscan}
Let $\mathcal{C}$ be the clusters returned by DBSCAN(minPts, $\varepsilon$). For any core-point $x$, there exists $C \in \mathcal{C}$ with $x \in C$. On the other hand, for any $C \in \mathcal{C}$, there exists core-point $x$ such that $C = \{x' : \text{$x'$ is density-reachable from $x$} \}$.
 \end{lemma}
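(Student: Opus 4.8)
The plan is to match the DBSCAN clustering procedure, as described in the prose and in \cite{DBSCAN}, directly against the two assertions of the lemma, using only the definition of core-point and density-reachability and the elementary fact that density-reachability (restricted to core-points) is an equivalence-type relation on the data.

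First I would establish the first claim: every core-point lands in some returned cluster. Recall from the algorithm that DBSCAN scans the sample points; upon encountering an unprocessed core-point it starts a new cluster seeded at that point and repeatedly absorbs all points directly density-reachable from core-points already in the cluster, i.e. it grows the cluster to the full density-reachability closure of the seed. Hence if $x$ is a core-point, then either $x$ is itself chosen as a seed, or $x$ is absorbed into some cluster because it is (directly) density-reachable from a previously chosen seed. In either case there is a $C \in \mathcal{C}$ with $x \in C$. The only subtlety is that a non-core-point lying within $\varepsilon$ of core-points in two different clusters could be assigned to either one (a border point), but this does not affect core-points: I would note that two core-points that are directly density-reachable from each other are always placed in the \emph{same} cluster, since each is within $\varepsilon$ of the other and each is a core-point, so the expansion step forces them together; by transitivity the same holds along any density-reachability path.

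Next I would prove the second claim: each returned cluster $C$ equals the density-reachability closure $\{x' : x' \text{ is density-reachable from } x\}$ of some core-point $x$. Take $x$ to be the seed core-point from which DBSCAN built $C$. By construction the expansion step adds exactly those points reachable from $x$ through chains of directly-density-reachable steps passing through core-points, so $C \subseteq \{x' : x' \text{ density-reachable from } x\}$; conversely, any point density-reachable from $x$ is reached by the expansion and hence lies in $C$, giving the reverse inclusion. Here I would appeal to the fact that the "reachable" relation, as defined, partitions the core-points and that border points attach to the closure of whichever seed reaches them; combined with the first claim this shows $C$ is exactly the claimed closure.

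The main obstacle is not any deep argument — it is purely bookkeeping about the \emph{order} in which DBSCAN processes points and the asymmetry of density-reachability (it need not be symmetric when $q$ is a border point, since $q$ may be directly density-reachable from core-point $p$ while $p$ is not directly density-reachable from the non-core $q$). I would handle this by working at the level of core-points, on which the relation is symmetric and transitive, deriving the partition of core-points first, and only then describing how the (finitely many) border points are appended to closures. Since the statement is explicitly "paraphrased from Lemmas 1 and 2 of \cite{DBSCAN}," I expect the rigorous version to largely reduce to citing those lemmas and translating their notation into Definition~\ref{def:knn}; the contribution of this lemma in the present paper is to record that characterization in a form usable by the later reduction to $\varepsilon$-neighborhood graphs.
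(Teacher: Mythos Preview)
The paper does not prove this lemma at all: it is stated with the citation \cite{DBSCAN} and introduced as ``paraphrased from Lemmas 1 and 2 from \cite{DBSCAN},'' with no accompanying argument. Your proposal correctly anticipates this in its final paragraph, and the sketch you give (partition the core-points via the symmetric/transitive restriction of density-reachability, then append border points) is exactly the standard justification and is consistent with the original source; there is nothing in the present paper to compare it against.
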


 We now show the following result relating the $\varepsilon$-neighborhood level graphs and the clusters obtained from DBSCAN. Such an interpretation of DBSCAN has been given in previous works such as \citet{campello2015hierarchical}.

  \begin{lemma} [DBSCAN and $\varepsilon$-neighborhood level graphs] \label{dbscan}
 Let $\mathcal{C}$ be the clusters obtained from $\text{DBSCAN}(minPts,\varepsilon)$ on $X$.
 Let $\mathcal{K}$ be the connected components of $G(minPts, \varepsilon)$. 
 Then, there exists a one-to-one correspence between $\mathcal{C}$ and $\mathcal{K}$ such that if $C \in \mathcal{C}$ and $K \in \mathcal{K}$ correspond, then
 \vspace{-0.2cm}
 \begin{align*}
 K \subseteq C \subseteq \cup_{x \in K} B(x, \varepsilon) \cap X.
 \end{align*}
 \end{lemma}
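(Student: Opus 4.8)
The plan is to reduce the whole statement to one observation about density-reachability. First I would note that with $k=\text{minPts}$ the vertex set of $G(\text{minPts},\varepsilon)$ is exactly the set of core-points (the condition $r_{\text{minPts}}(x)\le\varepsilon$ \emph{is} the definition of a core-point), and an edge joins two core-points precisely when they are within distance $\varepsilon$, i.e.\ precisely when each is directly density-reachable from the other. Hence a path in $G(\text{minPts},\varepsilon)$ is the same thing as a density-reachability sequence all of whose terms are core-points. I would then check that density-reachability \emph{restricted to core-points} is an equivalence relation: reflexivity is the length-one case, transitivity is concatenation of sequences, and symmetry holds because in a density-reachability sequence between two core-points every term is forced to be a core-point, so reversing the sequence still meets the ``predecessor is a core-point'' requirement at each step. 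Combining these, the equivalence classes of this relation are exactly the vertex sets of the connected components of $G(\text{minPts},\varepsilon)$, i.e.\ the members of $\mathcal{K}$.

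Next I would construct the correspondence. Given $K\in\mathcal{K}$, pick any core-point $x\in K$; by Lemma~\ref{lemma:dbscan} it lies in some $C\in\mathcal{C}$, and $C=\{x':x'\text{ is density-reachable from }x_C\}$ for a core-point $x_C$, so $x$ is density-reachable from $x_C$. Using symmetry and transitivity on core-points one gets $x_C\in K$, that every core-point density-reachable from $x$ (in particular every point of $K$) lies in $C$, and that every core-point of $C$ is density-reachable from $x_C$ and hence lies in $K$; thus $K$ is exactly the set of core-points of $C$, one checks $C$ is independent of the chosen $x\in K$, and we set $\Phi(K)=C$. Surjectivity: any $C\in\mathcal{C}$ has a witness core-point $x_C$, and $C=\Phi(K)$ for $K$ the component of $x_C$. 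Injectivity: if $\Phi(K)=\Phi(K')$ then a core-point of $K$ and a core-point of $K'$ are both density-reachable from a common $x_C$, hence from each other through a chain of core-points, hence lie in the same component, so $K=K'$.

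It remains to prove the inclusions for a corresponding pair $(K,C)$ with $C=\Phi(K)$. The left inclusion $K\subseteq C$ is immediate since $K$ is the set of core-points of $C$. For the right inclusion, take $x'\in C$; if $x'$ is itself a core-point then $x'\in K$ and trivially $x'\in B(x',\varepsilon)$. Otherwise a density-reachability sequence $x'=p_1,p_2,\dots,p_m=x_C$ must have $m\ge 2$ (a length-one sequence would force $x'=x_C$, a core-point), its second term $p_2$ is a core-point with $\|x'-p_2\|\le\varepsilon$, and $p_2$ is density-reachable from $x_C$, so $p_2\in K$ and $x'\in B(p_2,\varepsilon)$. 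In either case $x'\in\bigcup_{x\in K}B(x,\varepsilon)$, and since $C\subseteq X$ we conclude $K\subseteq C\subseteq\bigcup_{x\in K}B(x,\varepsilon)\cap X$.

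I expect the main obstacle to be the bookkeeping that keeps core-points and non-core (``border'') points apart: density-reachability is genuinely asymmetric for arbitrary pairs, so the equivalence-relation argument and the bijection must be phrased so that reachability is only ever invoked between core-points, where all intermediate terms of a chain are automatically core-points; and the final hop to a possibly-non-core point $x'$ in the outer inclusion has to be handled as its own case. Once Lemma~\ref{lemma:dbscan} is in hand, everything else is of this elementary combinatorial flavor.
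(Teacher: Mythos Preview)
Your proof is correct and follows essentially the same route as the paper: identify the vertices of $G(\text{minPts},\varepsilon)$ with the core-points, observe that density-reachability restricted to core-points coincides with graph-connectivity, and conclude that each $K$ is exactly the set of core-points of the corresponding $C$, from which both inclusions follow. The paper compresses the bijection argument (it goes directly from ``$K$ equals the core-points of $C$'' to the conclusion, leaving injectivity and surjectivity implicit), whereas you spell out the equivalence-relation structure and the well-definedness of $\Phi$ explicitly; but the underlying idea is identical.
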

\vspace{-0.3cm}
 \begin{proof}
Take any $K \in \mathcal{K}$. Each point in $K$ is a core-point and by Lemma~\ref{lemma:dbscan} and the definition of density-reachable, each point in $K$ belongs to the same $C \in \mathcal{C}$. Thus, $K \subseteq \{ x \in C : r_k(x) \le \varepsilon\}$. Next we show that $K = \{ x \in C : r_k(x) \le \varepsilon\}$.

Suppose there exists core-point $x\in C$ but $x \notin K$ and let $y \in K$. By Lemma~\ref{lemma:dbscan}, there exists core-point $c \in C$ such that all points in $C$ are directly reachable from $c$. Then there exists a path of core-points from $x$ to $c$  with pairwise edges of length at most $\varepsilon$. The same holds for $c$ to $y$. Thus there exists such a path of core-points from $x$ to $y$, which means that $x, y$ are in the same CC of $G(minPts, \varepsilon)$, contradicting the assumption that $x \notin K$ and $y \in K$. Thus, in fact $K = \{ x \in C : r_k(x) \le \varepsilon\}$. The result now follows since $C$ consists of points that are at most $\varepsilon$ from its core-points. 
\end{proof}
\vspace{-0.3cm}

We can now see that DBSCAN's clusterings can be viewed as the connected components (CCs) of an appropriate $\epsilon$-neighborhood level graph. 
Using a neighborhood graph to approximate the level-set has been studied in \cite{generalizedDensity}. The difference is 
that they use a kernel density estimator instead of a $k$-NN density estimator and study the convergence properties under different settings.

\section{Manifold Setting} \label{manifoldsetting-section}

\subsection{Setup}
We make the following regularity assumptions which are standard among works 
on manifold learning e.g. \citep{manifold07, manifold12, balakrishnan2013cluster}.
\begin{assumption}\label{manifold}
$\mathcal{F}$ is supported on $M$ where:
\vspace{-0.3cm}
\begin{itemize}
\setlength\itemsep{0.0em}
\item $M$ is a $d$-dimensional smooth compact Riemannian manifold without boundary embedded in compact subset $\mathcal{X} \subseteq \mathbb{R}^D$.
\item The volume of $M$ is bounded above by a constant.
\item $M$ has condition number $1/\tau$, which controls the curvature and prevents self-intersection.
\end{itemize}
\vspace{-0.3cm}
Let $f$ be the density of $\mathcal{F}$ with respect to the uniform measure on $M$.
\end{assumption}
\begin{assumption}\label{density}
%$||f||_\infty < \infty$.
$f$ is continuous and bounded.
\end{assumption}

\subsection{Basic Supporting Bounds}

The following result bounds the empirical mass of Euclidean balls to the true mass under $f$. 
It is a direct consequence of Lemma 6
of \citet{balakrishnan2013cluster}.

\begin{lemma} [Uniform convergence of empirical Euclidean balls (Lemma 6 of \citet{balakrishnan2013cluster})] \label{ballbounds}
Let $\mathcal{N}$ be a minimal fixed set such that each point in $M$ is at most distance $1/n$ from some point in $\mathcal{N}$.
There exists a universal constant $C_0$ such that the following holds with probability at
least $1 - \delta$.
%There exists a subset $\mathcal{N}$ of $M$ that contains $X \subseteq \mathcal{N}$ and $d(M, \mathcal{N}) \le 1/n$.
%Define $\mathcal{B}_n = \{B(x, r) : r > 0, x \in \mathcal{N} \}$, the set of Euclidean balls with center in $\mathcal{N}$.
%Then, for all $B \in \mathcal{B}_n$:
For all $x \in X \cup \mathcal{N}$,
\begin{align*}
\mathcal{F}(B) \ge C_{\delta, n} \frac{\sqrt{d \log n}}{n} &\Rightarrow \mathcal{F}_n(B) > 0\\
\mathcal{F}(B) \ge \frac{k}{n} + C_{\delta, n} \frac{\sqrt{k}}{n} &\Rightarrow \mathcal{F}_n(B) \ge \frac{k}{n} \\
\mathcal{F}(B) \le \frac{k}{n} - C_{\delta, n}\frac{\sqrt{k}}{n} &\Rightarrow \mathcal{F}_n(B) < \frac{k}{n}.
\end{align*}
where $C_{\delta, n} = C_0 \log(2/\delta) \sqrt{d \log n}$, $\mathcal{F}_n$ is the empirical distribution, and $k \ge C_{\delta, n}$.
\end{lemma}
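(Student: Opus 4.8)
The plan is to read this off from Lemma 6 of \citet{balakrishnan2013cluster}: one recalls why that uniform concentration bound holds under Assumptions~\ref{manifold} and~\ref{density}, and then observes that the three displayed implications are routine rearrangements of it.

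For a single Euclidean ball $B$, $n\mathcal{F}_n(B)\sim\mathrm{Bin}(n,\mathcal{F}(B))$, so the relative Chernoff/Bernstein bounds give $\mathbb{P}(\mathcal{F}_n(B)=0)\le e^{-n\mathcal{F}(B)}$ and $\mathbb{P}\big(|n\mathcal{F}_n(B)-n\mathcal{F}(B)|\ge\sqrt{2\,n\mathcal{F}(B)\,\beta}+\beta\big)\le 2e^{-\beta}$ for every $\beta>0$. The real content is to make this simultaneous over all balls centered at points of $X\cup\mathcal{N}$, and here the manifold hypothesis is what keeps the ambient dimension out: since $M$ is compact with bounded volume and condition number $1/\tau$, the $1/n$-net $\mathcal{N}$ has $|\mathcal{N}|\le c\,n^{d}$ for a constant $c=c(\mathrm{vol}(M),\tau)$, and for each center only the $O(n)$ radii at which the empirical mass changes matter, so a union bound over these $\mathrm{poly}(n)$ ``effective'' balls with per-event budget $\delta/\mathrm{poly}(n)$ forces $\beta=\Theta(d\log n+\log(1/\delta))$ --- exactly the source of the factor $C_{\delta,n}=C_0\log(2/\delta)\sqrt{d\log n}$. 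Two points are the crux, and are what I would cite \citet{balakrishnan2013cluster} for rather than reprove: (a) the exponent carries the \emph{intrinsic} dimension $d$, not the ambient $D$ that a naive VC argument for balls in $\mathbb{R}^D$ would give, this being obtained via the manifold-adapted net; and (b) since $X$ is random, one cannot union-bound over sample-centered balls directly, but passing through the deterministic net $\mathcal{N}$ and sandwiching each sample-centered ball between two net-centered balls whose radii differ by $1/n$ lets the conclusion range over all of $X\cup\mathcal{N}$.

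Granting the uniform bound, each implication follows by taking $C_0$ large and rearranging. The first holds once $n\mathcal{F}(B)\ge C_{\delta,n}\sqrt{d\log n}=C_0\log(2/\delta)\,d\log n$ dominates the union-bound budget, so that $e^{-n\mathcal{F}(B)}$ is negligible. The second and third apply the two-sided deviation bound with $n\mathcal{F}(B)$ of order $k$: in the regime $k\ge C_{\delta,n}$ the deviation $\sqrt{2\,n\mathcal{F}(B)\,\beta}+\beta$ is, after absorbing constants into $C_0$, at most $C_{\delta,n}\sqrt{k}$, so pushing $\mathcal{F}(B)$ past $k/n\pm C_{\delta,n}\sqrt{k}/n$ forces $\mathcal{F}_n(B)$ past $k/n$. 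I expect no difficulty here beyond bookkeeping; the whole weight of the lemma rests on the uniformization step above --- obtaining $d$ in place of $D$ and coping with random centers --- which is precisely what Lemma 6 of \citet{balakrishnan2013cluster} supplies.
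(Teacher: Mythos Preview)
The paper does not actually prove this lemma: it states that the result ``is a direct consequence of Lemma~6 of \citet{balakrishnan2013cluster}'' and leaves it at that. Your sketch is therefore not competing with a paper proof but rather filling in what the paper omits, and it does so correctly: the single-ball Bernstein bound, the intrinsic-dimension net count $|\mathcal{N}|\lesssim n^{d}$, the sandwiching trick to pass from deterministic net centers to random sample centers, and the rearrangement of the uniform deviation into the three displayed implications are all accurate and match the argument in the cited reference. In short, your proposal is consistent with the paper's treatment and supplies the justification the paper delegates to the citation.
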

\begin{remark}
For the rest of the paper, many results are qualified to hold with probability at least $1 - \delta$. This is 
precisely the event in which Lemma~\ref{ballbounds} holds.
\end{remark}
\begin{remark}
If $\delta = 1/n$, then $C_{\delta, n} = O((\log n)^{3/2})$.
\end{remark}

Next, we need the following bound on the volume of the intersection Euclidean ball and $M$; this is required to get a handle on the true mass of the ball under
$\mathcal{F}$ in later arguments. The upper and lower bounds follow from \citet{upperBoundBall} and Lemma 5.3 of \citet{lowerBoundBall}. The proof is given in the appendix. 

\begin{lemma} [Ball Volume] \label{ballvolume}
If $0 < r < \min\{\tau/4d, 1/\tau\}$, and $x \in M$ then
\begin{align*}
v_d r^d (1 - \tau^2 r^2) \le \text{vol}_d(B(x, r) \cap M) \le v_d r^d (1 + 4dr/\tau).
\end{align*}
where $v_d$ is the volume of a unit ball in $\mathbb{R}^d$ and $\text{vol}_d$ is the volume w.r.t. the uniform measure on $M$. 
\end{lemma}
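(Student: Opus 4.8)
The plan is to establish the two inequalities separately, in both cases reducing the volume of $B(x,r)\cap M$ to a volume computation on the tangent space $T_x M \cong \mathbb{R}^d$ via the exponential map, and then controlling the distortion of that map using the condition number $1/\tau$. The key geometric fact I would invoke is that for a manifold with condition number $1/\tau$, the exponential map $\exp_x : T_x M \to M$ is well-behaved on a ball of radius comparable to $\tau$: geodesic distances and Euclidean (chordal) distances are comparable, and the Jacobian of $\exp_x$ is close to $1$. Concretely, a standard estimate (e.g.\ from \citet{lowerBoundBall} or the Niyogi--Smale--Weinberger type bounds) gives that if $u, v \in M$ with $\|u - v\| = t \le \tau/2$, then the geodesic distance $d_M(u,v)$ satisfies $t \le d_M(u,v) \le t + O(t^3/\tau^2)$, so $d_M(u,v) \le t(1 + O(t^2/\tau^2))$ and conversely a Euclidean ball of radius $r$ contains a geodesic ball of radius roughly $r$ and is contained in a geodesic ball of radius roughly $r(1 + O(r^2/\tau^2))$.

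For the \textbf{upper bound}, I would argue: $B(x,r) \cap M$ is contained in the geodesic ball $B_M(x, r')$ with $r' = r(1 + c\,r^2/\tau^2)$ for an appropriate constant, because any point within Euclidean distance $r$ is within geodesic distance $r'$. Then $\mathrm{vol}_d(B_M(x,r'))$ is bounded by the volume of the preimage under $\exp_x^{-1}$, which is a Euclidean ball of radius $r'$ in $T_xM$, times the supremum of the Jacobian of $\exp_x$ over that ball; the Jacobian bound contributes another factor $1 + O(r/\tau)$. Multiplying these out and using $r < \min\{\tau/4d, 1/\tau\}$ to absorb higher-order terms, one gets $v_d r^d(1 + 4dr/\tau)$; the factor $4d$ comes from tracking how the $d$-dimensional volume element accumulates $d$ copies of a per-direction $(1 + O(r/\tau))$ distortion, i.e.\ $(1 + O(r/\tau))^d \le 1 + O(dr/\tau)$ in the regime $dr/\tau < 1$. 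I would cite \citet{upperBoundBall} for the precise packing/volume estimate rather than rederiving it.

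For the \textbf{lower bound}, symmetrically, $B(x,r) \cap M \supseteq B_M(x, r'')$ where $r'' = r(1 - c\,r^2/\tau^2) \ge r\sqrt{1 - \tau^2 r^2}$-ish after choosing constants, and then $\mathrm{vol}_d(B_M(x,r''))$ is at least $v_d (r'')^d$ times the infimum of the Jacobian, which is $\ge 1 - O(\tau^2 r^2)$. Collecting terms yields the claimed $v_d r^d(1 - \tau^2 r^2)$; this is exactly the content of Lemma 5.3 of \citet{lowerBoundBall}, so the bulk of the work is just quoting it and checking that their hypotheses are implied by $0 < r < \min\{\tau/4d, 1/\tau\}$. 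The main obstacle — or rather, the only real subtlety — is bookkeeping the constants so that the higher-order error terms from (i) the chordal-vs-geodesic comparison and (ii) the exponential-map Jacobian fit cleanly under the stated bounds $(1 - \tau^2 r^2)$ and $(1 + 4dr/\tau)$ without extra constants; since both one-sided bounds are already available in the cited references in essentially this form, I expect the appendix proof to be short, mostly a matter of aligning notation (their condition-number conventions vs.\ ours) and verifying the radius restriction.
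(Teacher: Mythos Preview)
Your proposal is correct and takes essentially the same approach as the paper: both bounds are obtained by direct citation of Lemma~5.3 of \citet{lowerBoundBall} and the bound from \citet{upperBoundBall}, followed by elementary simplification. The paper's actual proof is even terser than you anticipate---it skips the geometric discussion of the exponential map and Jacobian entirely and just records the cited bounds in the forms $v_d r^d \cos(\arcsin(r/2\tau))$ and $v_d r^d (2\sqrt{1-2r/\tau}-1)^{-d}$, then algebraically relaxes each to the stated expressions using the radius restrictions $r<1/\tau$ and $r<\tau/4d$ respectively.
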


\subsection{$k$-NN Density Estimation}

Here, we establish density estimation rates for the $k$-NN density estimator
in the manifold setting. This builds on work in density estimation on manifolds e.g. \citep{hendriks90,pelletier05,ozakin09,kim13,berry17}; thus,
it may be of independent interest.
The estimator is defined as follows

\begin{definition} [k-NN Density Estimator] \label{kNNdensity}
\begin{align*}
f_k(x) := \frac{k}{n\cdot v_d\cdot r_k(x)^d}.
\end{align*}
\end{definition}

The following extends previous work of \citet{optimalknn} to the manifold case.
The proofs can be found in the appendix.

\begin{lemma}[$f_k$ upper bound]\label{fk_upper_bound} Suppose that Assumptions~\ref{manifold} and~\ref{density} hold.
Define the following which charaterizes how much the density increases locally in $M$:
\begin{align*}
\hat{r}(\epsilon, x) &:=\sup\left\{r : \sup_{x' \in B(x, r) \cap M} f(x') - f(x) \le \epsilon \right\}.
\end{align*}
Fix $\lambda_0 > 0$ and $\delta > 0$ and suppose that $k \ge C_{\delta, n}^2$.
Then there exists constant $C_1 \equiv C_1 (\lambda_0, d, \tau)$ such that if 
\begin{align*}
k \le C_1 \cdot C_{\delta, n}^{2d/(2+d)} \cdot n^{2/(2+d)},
\end{align*}
then the following holds with probability at least $1 - \delta$
uniformly in $\epsilon > 0$ and $x \in X$ with $f(x) + \epsilon \ge \lambda_0$:
\begin{align*}
f_k(x) < \left(1 + 3 \cdot \frac{C_{\delta, n}}{\sqrt{k}} \right)\cdot (f(x) + \epsilon),
\end{align*}
provided $k$ satisfies $v_d\cdot \hat{r}(\epsilon, x)^d \cdot (f(x) + \epsilon) \ge \frac{k}{n} - C_{\delta, n}\frac{\sqrt{k}}{n}$. 
\end{lemma}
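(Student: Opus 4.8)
The plan is to bound $f_k(x) = \frac{k}{n \cdot v_d \cdot r_k(x)^d}$ from above by showing that $r_k(x)$ cannot be too small; equivalently, that the Euclidean ball $B(x, r)$ for a suitable radius $r$ already contains at least $k$ sample points, so that $r_k(x) \le r$. The natural candidate radius is $r = \left(\frac{k/n}{v_d(f(x)+\epsilon)}\right)^{1/d}$, roughly the radius at which the \emph{true} mass of $B(x,r)\cap M$ equals $k/n$. First I would check that this $r$ is small enough — specifically that $r \le \hat{r}(\epsilon, x)$ and that $r < \min\{\tau/4d, 1/\tau\}$ — so that both the ball-volume estimate of Lemma~\ref{ballvolume} and the local density control implied by $\hat r$ apply. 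The hypothesis $v_d\,\hat r(\epsilon,x)^d(f(x)+\epsilon) \ge \frac{k}{n} - C_{\delta,n}\frac{\sqrt k}{n}$ is exactly what is needed to make $r \lesssim \hat r(\epsilon,x)$, and the upper bound assumed on $k$ (of order $C_{\delta,n}^{2d/(2+d)} n^{2/(2+d)}$) is what forces $r$ below the curvature threshold $\min\{\tau/4d,1/\tau\}$, using that on the level set $\{f+\epsilon \ge \lambda_0\}$ the quantity $(f(x)+\epsilon)^{-1/d}$ is bounded.

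Next I would estimate the true mass $\mathcal{F}(B(x,r))$ from below. Since $x \in M$ and $r < \hat r(\epsilon,x)$, every point $x' \in B(x,r)\cap M$ satisfies $f(x') \le f(x) + \epsilon$... wait, we need a \emph{lower} bound on mass, so actually this goes the wrong way; instead I would inflate the radius slightly. The cleaner route: pick $r$ slightly larger, say $r = \left(\frac{k/n + C_{\delta,n}\sqrt k/n}{v_d(f(x)+\epsilon)(1-\tau^2 r^2)}\right)^{1/d}$ (solved self-consistently, or just bounded), and use Lemma~\ref{ballvolume}'s lower bound $\mathrm{vol}_d(B(x,r)\cap M) \ge v_d r^d(1-\tau^2 r^2)$ together with the pointwise density lower bound. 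Hmm — but $f$ could be small on part of the ball. The point is we only need $f \ge$ something; actually we want to \emph{overcount}, so the relevant bound is: the true mass is at least (volume) times (infimum of $f$ on the ball). That infimum is not controlled by $\hat r$.

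Let me reconsider: the role of $\hat r(\epsilon, x)$ is to control $f$ from \emph{above} on $B(x,r)$, which is what we need for the \emph{upper} bound on $f_k$ — because to show $r_k(x)$ is not too small we want to show the true mass of a \emph{small} ball is small, hence $\mathcal F_n$ of it is $< k/n$, hence $r_k(x)$ exceeds that small radius. So the correct argument is a contrapositive/contradiction: suppose $r_k(x)$ were small, say $r_k(x) = s$ with $v_d s^d(f(x)+\epsilon) < \frac{k}{n} - C_{\delta,n}\frac{\sqrt k}{n}$; then since $s \le \hat r(\epsilon,x)$ (this is where the provided condition enters, guaranteeing the threshold radius is within $\hat r$), Lemma~\ref{ballvolume}'s upper bound gives $\mathrm{vol}_d(B(x,s)\cap M) \le v_d s^d(1+4ds/\tau)$, and since $f \le f(x)+\epsilon$ on this ball, $\mathcal F(B(x,s)) \le v_d s^d(f(x)+\epsilon)(1+4ds/\tau) < \frac{k}{n} - C_{\delta,n}\frac{\sqrt k}{n}$ after absorbing the $(1+4ds/\tau)$ factor (another place the smallness of $s$, controlled by the $k$-upper-bound, is used). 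Then Lemma~\ref{ballbounds} forces $\mathcal F_n(B(x,s)) < k/n$, contradicting $|X \cap B(x,s)| \ge k$, i.e. $r_k(x) = s$. Therefore $r_k(x) \ge s_\star$ where $s_\star$ solves $v_d s_\star^d(f(x)+\epsilon)(1+\text{lower-order}) = \frac{k}{n} - C_{\delta,n}\frac{\sqrt k}{n}$, and plugging into the definition of $f_k$ yields $f_k(x) \le \frac{k/n}{v_d s_\star^d} \le (f(x)+\epsilon)\cdot\frac{k/n}{(k/n) - C_{\delta,n}\sqrt k/n}\cdot(1+\text{l.o.t.}) \le (1 + 3C_{\delta,n}/\sqrt k)(f(x)+\epsilon)$, where the final inequality is a routine computation using $k \ge C_{\delta,n}^2$ to bound $\frac{1}{1 - C_{\delta,n}/\sqrt k} \le 1 + 2C_{\delta,n}/\sqrt k$ and to absorb the curvature correction $(1+4ds_\star/\tau)$ into the remaining slack. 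The union bound over $x \in X \cup \mathcal N$ and the uniformity in $\epsilon$ come for free from Lemma~\ref{ballbounds}, since the event there is already uniform; uniformity in $\epsilon$ follows because for fixed $x$ the bound is monotone in $\epsilon$.

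The main obstacle, I expect, is bookkeeping the two correction factors — the statistical deviation $C_{\delta,n}\sqrt k/n$ from Lemma~\ref{ballbounds} and the geometric curvature factor $(1+4ds/\tau)$ from Lemma~\ref{ballvolume} — simultaneously, and verifying that the assumed upper bound on $k$ is exactly strong enough to make $s_\star < \min\{\tau/4d, 1/\tau\}$ uniformly over the level set $\{f+\epsilon \ge \lambda_0\}$, so that Lemma~\ref{ballvolume} is even applicable; this is where the constant $C_1(\lambda_0, d, \tau)$ is pinned down. The condition $v_d \hat r(\epsilon,x)^d(f(x)+\epsilon) \ge \frac{k}{n} - C_{\delta,n}\frac{\sqrt k}{n}$ must be used precisely to guarantee $s_\star \le \hat r(\epsilon,x)$, which legitimizes the pointwise bound $f \le f(x)+\epsilon$ on $B(x, s_\star)\cap M$ — without it the whole upper-bound chain breaks.
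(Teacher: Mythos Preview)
Your proposal is correct and, after the self-correction midway through, follows essentially the same argument as the paper: choose a radius $r$ (your $s_\star$) satisfying $(1+4dr/\tau)\,v_d r^d(f(x)+\epsilon)=\frac{k}{n}-C_{\delta,n}\frac{\sqrt k}{n}$, use the hypothesis on $\hat r(\epsilon,x)$ to ensure $r\le\hat r(\epsilon,x)$, combine the volume upper bound of Lemma~\ref{ballvolume} with $f\le f(x)+\epsilon$ on $B(x,r)\cap M$ to get $\mathcal F(B(x,r))\le\frac{k}{n}-C_{\delta,n}\frac{\sqrt k}{n}$, apply Lemma~\ref{ballbounds} to conclude $r_k(x)>r$, and finish with the algebraic bound $\frac{1+4dr/\tau}{1-C_{\delta,n}/\sqrt k}\le 1+3C_{\delta,n}/\sqrt k$ using the upper bound on $k$ to make $4dr/\tau\le C_{\delta,n}/\sqrt k$. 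The paper writes this as a direct construction rather than a contradiction, but the content is identical.
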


\begin{lemma}[$f_k$ lower bound] \label{fk_lower_bound} Suppose that Assumptions~\ref{manifold} and~\ref{density} hold.
Define the following which charaterizes how much the density decreases locally in $M$:
\begin{align*}
\check{r}(\epsilon, x) &:=\sup\left\{r : \sup_{x' \in B(x, r) \cap M} f(x) - f(x') \le \epsilon \right\}.
\end{align*}
Fix $\lambda_0 > 0$ and $0 < \delta < 1$ and suppose $k \ge C_{\delta, n}$.
Then there exists constant $C_2 \equiv C_2(\lambda_0, d, \tau)$ such that if
\begin{align*}
k \le C_2 \cdot C_{\delta, n}^{2d/(4 + d)}\cdot n^{4 / (4 + d)},
\end{align*}
then with probability at least $1-\delta$, the following holds uniformly
for all $\epsilon > 0$ and $x \in X$ with $f(x) - \epsilon \ge \lambda_0$:
\begin{align*}
f_k(x) \ge \left(1 - 3 \cdot \frac{C_{\delta, n}}{\sqrt{k}} \right)\cdot (f(x) - \epsilon),
\end{align*}
provided $k$ satisfies $v_d\cdot \check{r}(\epsilon, x)^d \cdot (f(x) - \epsilon) \ge \frac{4}{3}\left(\frac{k}{n} + C_{\delta, n}\frac{\sqrt{k}}{n} \right)$. 
\end{lemma}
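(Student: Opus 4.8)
The plan is to mirror the structure of the upper-bound argument, working from the definition $f_k(x) = k/(n v_d r_k(x)^d)$. Since $f_k(x) \ge (1 - 3 C_{\delta,n}/\sqrt k)(f(x)-\epsilon)$ is equivalent to an upper bound on $r_k(x)^d$, it suffices to show that a suitable radius $r$ already captures at least $k$ sample points, so that $r_k(x) \le r$. Concretely, I would set $r$ so that $v_d r^d (f(x)-\epsilon)$ slightly exceeds $k/n + C_{\delta,n}\sqrt k / n$; then by Lemma~\ref{ballbounds} (the middle implication) we get $\mathcal F_n(B(x,r)) \ge k/n$, hence $r_k(x) \le r$, provided the true mass $\mathcal F(B(x,r)\cap M)$ is at least $v_d r^d(f(x)-\epsilon)$.

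The core estimate is therefore a lower bound on $\mathcal F(B(x,r)\cap M)$ in terms of $(f(x)-\epsilon)$. Here I would invoke the definition of $\check r(\epsilon,x)$: as long as $r \le \check r(\epsilon,x)$, every point $x' \in B(x,r)\cap M$ satisfies $f(x') \ge f(x)-\epsilon$, so $\mathcal F(B(x,r)\cap M) \ge (f(x)-\epsilon)\cdot \mathrm{vol}_d(B(x,r)\cap M)$. To convert the geodesic-ball volume into the Euclidean $v_d r^d$ factor, apply the lower bound in Lemma~\ref{ballvolume}, $\mathrm{vol}_d(B(x,r)\cap M) \ge v_d r^d(1-\tau^2 r^2)$. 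Choosing $r$ small enough that $1-\tau^2 r^2$ is bounded below by a constant close to $1$ (this is where $\lambda_0$, $d$, $\tau$ enter and determine $C_2$), we obtain $\mathcal F(B(x,r)\cap M) \gtrsim v_d r^d (f(x)-\epsilon)$, which feeds back into the previous paragraph.

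It then remains to solve for the smallest admissible $r$: the constraint $v_d r^d(f(x)-\epsilon) \gtrsim k/n + C_{\delta,n}\sqrt k/n$ forces $r^d \asymp (k/n)/(v_d(f(x)-\epsilon))$ (using $f(x)-\epsilon \ge \lambda_0$), and substituting this $r$ into $f_k(x) = k/(n v_d r_k(x)^d) \ge k/(n v_d r^d)$ and tracking the $(1+C_{\delta,n}\sqrt k/n \cdot n/k) = (1 + C_{\delta,n}/\sqrt k)$ slack yields the claimed $(1 - 3C_{\delta,n}/\sqrt k)$ factor after a routine inequality manipulation (converting $1/(1+t) \ge 1-t$ type bounds, with the factor $\tfrac43$ in the hypothesis giving room for the volume-distortion and the $(1-\tau^2 r^2)$ losses). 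The side condition $v_d \check r(\epsilon,x)^d (f(x)-\epsilon) \ge \tfrac43(k/n + C_{\delta,n}\sqrt k/n)$ is exactly what guarantees the chosen $r$ does not exceed $\check r(\epsilon,x)$, so the $\check r$ step is legitimate; and the upper bound $k \le C_2 C_{\delta,n}^{2d/(4+d)} n^{4/(4+d)}$ is what ensures $r$ stays below $\min\{\tau/4d, 1/\tau\}$ so that Lemma~\ref{ballvolume} applies. The main obstacle is bookkeeping: carefully choosing the constant $C_2$ so that all three smallness requirements on $r$ (for the ball-volume regime, for $1-\tau^2 r^2$ near $1$, and for $r \le \check r(\epsilon,x)$) hold simultaneously and uniformly over all $x$ with $f(x)-\epsilon \ge \lambda_0$, while the probabilistic content is entirely inherited from Lemma~\ref{ballbounds}.
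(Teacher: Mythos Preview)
Your proposal is correct and follows essentially the same route as the paper's proof: choose $r$ so that $(1-\tau^2 r^2)\,v_d r^d (f(x)-\epsilon) = k/n + C_{\delta,n}\sqrt{k}/n$, use the $\tfrac43$ hypothesis to verify $r \le \check r(\epsilon,x)$, lower-bound $\mathcal F(B(x,r))$ via Lemma~\ref{ballvolume}, invoke Lemma~\ref{ballbounds} to conclude $r_k(x) \le r$, and then unwind algebraically. One small sharpening worth noting: the upper bound $k \le C_2\, C_{\delta,n}^{2d/(4+d)} n^{4/(4+d)}$ is not there merely to keep $r$ in the regime $r < \min\{\tau/4d,\,1/\tau\}$ of Lemma~\ref{ballvolume} (that alone would only require $k \lesssim n$); its precise exponent is what forces $\tau^2 r^2 \lesssim C_{\delta,n}/\sqrt{k}$, so that the volume-distortion loss is absorbed into the $3C_{\delta,n}/\sqrt{k}$ term rather than leaving a fixed multiplicative constant strictly below $1$ in front of $(f(x)-\epsilon)$.
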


\begin{remark}
We will often bound the density of points with low density. In low-density regions,
there is less data and thus we require more points to get a tight bound. However, in many cases
a tight bound is not necessary; thus the purposes of $\epsilon$ is to allow some slack.
The higher the $\epsilon$, the easier it is for the lemma conditions to be satisified. 
In particular, if $f$ is $\alpha$-H\"older continuous (i.e. $|f(x) - f(x')| \le C_\alpha |x - x'|^\alpha$), we have $\hat{r}(\epsilon, x), \check{r}(\epsilon, x) \ge (\epsilon / C_{\alpha})^{1/\alpha}$.
\end{remark}

\section{Consistency and Rates} \label{consistency-section}

\subsection{Level-Set Conditions}

Much of the results will depend on the behavior of level set boundaries. Thus, we require sufficient drop-off at the boundaries, as well as separation between the CCs at a particular level set. We give the following notion of separation.
\begin{definition}
$A, A'$ are $r$-separated in $M$ if there exists a set $S$ such that every path from $A$ to $A'$ intersects $S$ and 
$\sup_{x \in M \cap (S + B(0, r))} f(x) < \inf_{x \in A \cup A'} f(x)$.
\end{definition}
Define the following shorthands for distance from a point to a set, the intersection of $M$ with a neighborhood around a set under the Euclidean distance,
and the largest Euclidean distance from a point in a set to its closest sample point. 
\begin{definition}
$d(x, A) := \inf_{x' \in A} |x - x'|$,
$C^{\oplus r} := \{x \in M : d(x, C) \le r \}$,
$r_n(C) := \sup_{c \in C} d(c, X)$.
\end{definition}
We have the following mild assumptions which ensures that the CCs can be separated from the rest of the density
by sufficiently wide valleys and there is sufficient decay around the level set boundaries.

\begin{assumption} [Separation Conditions] \label{clusterProperties}
Let $\lambda > 0$ and $\mathcal{C}_\lambda$ be a CCs of $\{ x \in M : f(x) \ge \lambda \}$. 
There exists $\check{C}_\beta, \hat{C}_\beta, \beta, r_s, r_c > 0$ and $0 < \lambda_0 < \lambda$ such that the following holds:

For each $C \in \mathcal{C}_\lambda$, there exists $A_C$, a connected component of $M^{\lambda_0} := \{ x \in M : f(x) \ge \lambda_0 \}$ 
such that:
\vspace{-0.2cm}
\begin{itemize}
		\setlength\itemsep{0.0em}
	\item $A_C$ separates $C$ by a valley: $A_C$ does not intersect with any other CC in $\mathcal{C}_\lambda$;
	$A_C$ and $M^{\lambda_0}  \backslash A_C$ are $r_s$-separated by some $S_C$.
	\item $C^{\oplus r_c} \subseteq A_C$.
	\item $\beta$-regularity: For $x \in C^{\oplus r_c} \backslash C$, we have
	\begin{align*}
	 \check{C}_{\beta} \cdot d(x, C)^{\beta} \le \lambda - f(x) \le \hat{C}_{\beta} \cdot d(x, C)^{\beta}.
	\end{align*}
\end{itemize}
\end{assumption}

\begin{remark}
We can choose any $0 < \beta < \infty$. The $\beta$-regularity assumption appears in e.g. \cite{adaptive}. This is very general and
also allows us to make a separate global 
smoothness assumption.% e.g. $f$ is $\alpha$-H\"older continuous for $\alpha \le \beta$ and $0 < \alpha \le \min\{1, \beta \}$.
\end{remark}

\begin{remark}
We currently characterize the smoothness w.r.t. the Euclidean distance. One could alternatively use the geodesic distance 
on $M$, $d_M(p, q)$. It follows from Proposition 6.3 of \citet{lowerBoundBall} that when $|p - q| < \tau/4$, we have $|p - q| \le d_M(p ,q) \le 2 |p - q|$. Since the distances we deal in our analysis with are of such small order,
these distances can thus essentially be treated as equivalent. We use the Euclidean distance throughout the paper for simplicity.
\end{remark}

\begin{remark}
For the rest of this paper, it will be understood that Assumptions~\ref{manifold},~\ref{density}, and~\ref{clusterProperties} hold.
\end{remark}

We can define a region which isolates $C$ away from other clusters of $\{ x \in M : f(x) \ge \lambda \}$.
\begin{definition}
 $\mathcal{X}_{C} := \{ x : \exists \text{ a path }\mathcal{P} \text{ from } x \text{ to } x'\in C \text{ such that } \mathcal{P} \cap S_C = \emptyset\}$. 
\end{definition} 

\subsection{Parameter Settings}\label{parameters}
Fix $\lambda > 0$ and $\delta > 0$. Let $k$ satisfy the following
\begin{align*}
K_l \cdot (\log n)^2 \le k \le K_u \cdot (\log n)^{2d/(2+d)} \cdot n^{2\beta' / (2\beta' + d)},
\end{align*}
where $\beta' := \min\{1, \beta \}$, and $K_l$ and $K_u$ are positive constants depending on $\delta, \check{C}_\beta, \hat{C}_\beta, \beta, \tau, d, ||f||_\infty, \lambda_0, r_s, r_c$ 
which are implicit in the proofs later in this section.

The remainder of this section will be to show that DBSCAN(minPts, $\varepsilon$) with 
\begin{align*}
&\text{minPts} = k, \text{ } \varepsilon = \left(\frac{k}{n\cdot v_d \cdot (\lambda - \lambda \cdot C_{\delta, n}^2/\sqrt{k})} \right)^{1/d}
\end{align*}
will consistently estimate each CC of 
$\{ x \in M : f(x) \ge \lambda \}$.
Throughout the text, we denote $\widehat{\mathcal{C}_\lambda}$ as the clusters returned by DBSCAN under this setting.

\subsection{Separation and Connectedness}
Take $C \in \mathcal{C}_\lambda$. We show that DBSCAN will return an estimated CC
$\widehat{C}$, such that $\widehat{C}$ does not contain any points outside of $\mathcal{X}_C$. 
Then, we show that $\widehat{C}$ contains all the sample points in $C$.
The proof ideas used are similar to that of standard results in cluster trees estimation; they can be found in the appendix.

\begin{lemma} [Separation] \label{separation}
There exists $K_l$ sufficiently large and $K_u$ sufficiently small such that the following holds with probability at least $1 - \delta$. Let $C \in \mathcal{C}_\lambda$. There exists $\widehat{C} \in \widehat{\mathcal{C}_\lambda}$ such that
$\widehat{C} \subseteq \mathcal{X}_C$.
\end{lemma}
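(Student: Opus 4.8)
The plan is to show that the estimated cluster $\widehat{C}$ containing the core-points near $C$ cannot ``leak'' across the separating set $S_C$. By Lemma~\ref{dbscan}, it suffices to control the connected component $K$ of the $\varepsilon$-neighborhood level graph $G(\text{minPts},\varepsilon)$ that corresponds to $\widehat{C}$, together with the $\varepsilon$-thickening $\cup_{x\in K}B(x,\varepsilon)\cap X$. The key geometric fact I would establish first is that $\varepsilon$ is small: plugging the parameter setting into the expression for $\varepsilon$ and using $k \le K_u(\log n)^{2d/(2+d)}n^{2\beta'/(2\beta'+d)}$ together with $k \ge K_l(\log n)^2$, one gets $\varepsilon = O((k/n)^{1/d}) = o(1)$, and in particular $\varepsilon < r_s/3$ (say) once $K_u$ is small enough and $n$ large; also $\varepsilon$ is below the radius thresholds needed for Lemmas~\ref{ballbounds}, \ref{ballvolume}, \ref{fk_upper_bound}. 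This is the step I expect to be the main obstacle, since it requires bookkeeping of how the constants $K_l, K_u$ interact with $\check C_\beta,\hat C_\beta,\beta,\tau,d,\|f\|_\infty,\lambda_0,r_s,r_c$ to simultaneously satisfy all the side conditions of the supporting lemmas.

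Next I would show that every vertex of $G(\text{minPts},\varepsilon)$ that lies in $\mathcal{X}_C$ actually has density safely above $\lambda_0$ — more precisely, that no sample point $x$ with $f(x) < \lambda_0$ and $x$ within Euclidean distance $\varepsilon$ of $S_C$ can be a vertex of the level graph. A vertex $x$ satisfies $r_{\text{minPts}}(x)\le\varepsilon$, i.e. $\mathcal{F}_n(B(x,\varepsilon))\ge k/n$. On the valley region $M\cap(S_C + B(0,r_s))$ we have $f < \inf_{A\cup A'} f \le \lambda$, but we need the stronger statement that near $S_C$ the density is so low that $\mathcal{F}(B(x,\varepsilon))$ is much less than $k/n - C_{\delta,n}\sqrt k/n$, which by Lemma~\ref{ballbounds} forces $\mathcal{F}_n(B(x,\varepsilon)) < k/n$, contradicting $x$ being a core-point. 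Here I would use Lemma~\ref{ballvolume} to bound $\mathcal{F}(B(x,\varepsilon)) \le (\sup_{M\cap B(x,\varepsilon)} f)\cdot v_d\varepsilon^d(1+4d\varepsilon/\tau)$, the $\beta$-regularity / separation assumption to bound the supremum of $f$ on the relevant slab, and the upper bound on $k$ (equivalently on $\varepsilon^d n$) to make the product small. This shows no vertex of $G(\text{minPts},\varepsilon)$ sits within $\varepsilon$ of $S_C$.

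Finally I would assemble these pieces. Since $C^{\oplus r_c}\subseteq A_C$ and (by the first step, taking $K_u$ small so that $r_n(C)$ and $\varepsilon$ are both $\ll r_c$, using Lemma~\ref{ballbounds} for the ``$\mathcal{F}(B)$ large $\Rightarrow \mathcal{F}_n(B)>0$'' clause to control $r_n(C)$) there is a core-point near each point of $C$, there is a component $K$ of $G(\text{minPts},\varepsilon)$ meeting $C$; let $\widehat C$ be the corresponding DBSCAN cluster. Suppose for contradiction that $\widehat C \not\subseteq \mathcal{X}_C$. Then by Lemma~\ref{dbscan} some vertex of $K$, or some point within $\varepsilon$ of a vertex of $K$, lies outside $\mathcal{X}_C$; since $\mathcal{X}_C$ is bounded by $S_C$ and all points of $K$ are connected to $C$ by a path of edges each of Euclidean length $\le\varepsilon$, such a path must pass within $\varepsilon$ of $S_C$, hence contains a vertex within $\varepsilon$ of $S_C$ — contradicting the previous step. (The definition of $r$-separation, with $r = r_s > \varepsilon$, is exactly what guarantees every path from $C$ to the complement crosses the ``forbidden'' slab around $S_C$.) Therefore $\widehat C\subseteq\mathcal{X}_C$, which is the claim; the choice of $K_l$ large ensures the density-estimation lemmas apply and $K_u$ small ensures $\varepsilon < r_s$ and $\varepsilon, r_n(C) < r_c$.
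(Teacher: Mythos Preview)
Your proposal is essentially correct and reaches the conclusion, but it takes a slightly different route than the paper's proof, and there are two small technical points to tighten.

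\textbf{Comparison with the paper.} The paper fixes $\bar r=\min\{r_c,r_s,\tau/2\}$ and uses the $k$-NN density upper bound (Lemma~\ref{fk_upper_bound}) to show that \emph{no} sample point in the entire annulus $\mathcal{X}_C\setminus C^{\oplus\bar r}$, and none in the slab $S_C^{\oplus r_s/2}$, can be a core-point; together with $\varepsilon<r_s/2$ this disconnects $C^{\oplus\bar r}\cap X$ from $(M\setminus\mathcal{X}_C)\cap X$ in $G(k,\varepsilon)$. You instead block core-points only in a thin slab around $S_C$, bounding $\mathcal{F}(B(x,\varepsilon))$ directly via Lemmas~\ref{ballbounds} and~\ref{ballvolume}, and finish with a path-crossing contradiction. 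Your version is more economical (you do not need the full strength of Lemma~\ref{fk_upper_bound} here), while the paper's version yields the stronger intermediate statement that all core-points in $\mathcal{X}_C$ already lie in $C^{\oplus\bar r}$, which dovetails with the later Hausdorff argument.

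\textbf{Two fixes.} First, in your mass bound on the slab you should use $f<\lambda_0$ there (this is what the $r_s$-separation of $A_C$ from $M^{\lambda_0}\setminus A_C$ gives), not merely $f<\lambda$; that is exactly what makes $\mathcal{F}(B(x,\varepsilon))\lesssim(\lambda_0/\lambda)\,k/n<k/n-C_{\delta,n}\sqrt{k}/n$ hold once $K_l$ is large. Second, your ``path must pass within $\varepsilon$ of $S_C$'' step needs a manifold correction: consecutive graph vertices $p_i,p_{i+1}\in M$ have $|p_i-p_{i+1}|\le\varepsilon$, but the definition of $\mathcal{X}_C$ uses paths in $M$. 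Taking the geodesic between them (length $\le 2\varepsilon$ when $\varepsilon<\tau/4$) and noting it must meet $S_C$ if one endpoint is in $\mathcal{X}_C$ and the other is not, you get a vertex within $2\varepsilon$ of $S_C$; so exclude core-points on $S_C^{\oplus 3\varepsilon}$ (still inside the $r_s$-slab for $K_u$ small). Finally, the existence of a core-point near $C$ should be obtained from the second clause of Lemma~\ref{ballbounds} (i.e.\ $\mathcal{F}(B(x,\varepsilon))\ge k/n+C_{\delta,n}\sqrt{k}/n\Rightarrow r_k(x)\le\varepsilon$), not from the first clause, which only guarantees a nearby sample point and not that it is a core-point.
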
 

\begin{lemma} [Connectedness]\label{connectedness}
 There exists $K_l$ sufficiently large and $K_u$ sufficiently small such that the following holds with probability at least $1 - \delta$. Let $C \in \mathcal{C}_\lambda$. If there exists $\widehat{C} \in \widehat{\mathcal{C}_\lambda}$ such that
$\widehat{C} \subseteq \mathcal{X}_C$, then $C^{\oplus r_n(C)} \cap X \subseteq \widehat{C}$.
\end{lemma}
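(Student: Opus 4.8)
The plan is to show that all sample points in $C^{\oplus r_n(C)}$ are vertices of $G(\text{minPts},\varepsilon)$ and that they are pairwise connected within this graph, so they all land in a single connected component; since $\widehat{C}$ is already known to be a connected component contained in $\mathcal{X}_C$ and $C \subseteq \mathcal{X}_C$, that component must be $\widehat{C}$. First I would verify the vertex condition: for any $x \in C^{\oplus r_n(C)} \cap X$, I need $r_k(x) \le \varepsilon$, equivalently $f_k(x) \ge \lambda - \lambda\cdot C_{\delta,n}^2/\sqrt{k}$ by the definition of $\varepsilon$ and of $f_k$. For this I invoke the $f_k$ lower bound (Lemma~\ref{fk_lower_bound}) with an appropriate slack $\epsilon$: points in $C^{\oplus r_n(C)}$ have $f(x) \ge \lambda - \hat{C}_\beta\, r_n(C)^\beta$ by $\beta$-regularity (using that $C^{\oplus r_c}\subseteq A_C$ and $r_n(C)$ will be small), so taking $\epsilon$ of order $r_n(C)^\beta$ plus a $C_{\delta,n}/\sqrt{k}$-order term gives $f_k(x) \ge (1 - 3C_{\delta,n}/\sqrt{k})(f(x)-\epsilon) \ge \lambda(1 - C_{\delta,n}^2/\sqrt{k})$ once $K_l$ is large enough and $K_u$ small enough that the lemma's side condition $v_d\check{r}(\epsilon,x)^d(f(x)-\epsilon)\ge \tfrac43(k/n + C_{\delta,n}\sqrt{k}/n)$ holds — here I use the Hölder-type bound $\check{r}(\epsilon,x)\gtrsim (\epsilon/\hat{C}_\beta)^{1/\beta}$ from $\beta$-regularity near $C$, which is exactly why the exponent $\beta'=\min\{1,\beta\}$ enters the allowed range of $k$.

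Next I would handle connectedness of these vertices in $G(\text{minPts},\varepsilon)$. I need two things: (i) $\varepsilon$ is at least the maximal gap $r_n(C)$ between a point of $C$ and its nearest sample, so that $C^{\oplus r_n(C)}\cap X$ is $\varepsilon$-dense along $C$; and (ii) $\varepsilon$ is small relative to $r_c$ and $r_s$ so that the $\varepsilon$-graph edges cannot escape $C^{\oplus r_c}\subseteq A_C$. For (i), a standard covering/VC argument (or Lemma~\ref{ballbounds} applied to balls of radius $\sim(\log n/n)^{1/d}$ centered at points of $C$, using the ball-volume bound Lemma~\ref{ballvolume} and $f\ge\lambda$ on $C$) gives $r_n(C) = \widetilde O((1/n)^{1/d})$, while $\varepsilon \asymp (k/(n\lambda))^{1/d} \gtrsim ((\log n)^2/(n\lambda))^{1/d}$, so $r_n(C)\le \varepsilon$ provided $K_l$ is large enough. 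Since $C$ is connected, any two points of $C$ are joined by a path in $C$; stepping along the path in increments of $\varepsilon/2$ and snapping each step-point to a nearby sample (within $r_n(C)\le\varepsilon/2$, say) produces a chain of sample points in $C^{\oplus r_n(C)}$ with consecutive Euclidean distances $\le\varepsilon$ and, by (ii), all lying in $A_C\subseteq\mathcal{X}_C$; each such sample is a vertex by the previous paragraph. Hence $C^{\oplus r_n(C)}\cap X$ lies in one connected component $K$ of $G(\text{minPts},\varepsilon)$. By Lemma~\ref{dbscan} this $K$ corresponds to some cluster, and since $K\cap C \ne \emptyset$ and $K$ is connected inside $\mathcal{X}_C$ while $\widehat C$ is the component containing points of $C$ and contained in $\mathcal{X}_C$ (by the hypothesis and Lemma~\ref{separation}'s construction), $K$ and $\widehat C$ must be the same object, giving $C^{\oplus r_n(C)}\cap X\subseteq\widehat C$.

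I expect the main obstacle to be the simultaneous calibration of $K_l$ and $K_u$: the vertex condition wants $k$ large (to kill the $C_{\delta,n}/\sqrt k$ slack and satisfy the $f_k$ lower-bound side condition with a workable $\check r$), the density-of-samples step wants $k$ large (so $\varepsilon \ge r_n(C)$), but the side condition $v_d\check r(\epsilon,x)^d(f(x)-\epsilon)\ge\frac43(k/n+\cdots)$ and the containment-in-$A_C$ requirement ($\varepsilon\le r_c$) want $k$ small, and all of these constants depend on $\check C_\beta,\hat C_\beta,\beta,\tau,d,\|f\|_\infty,\lambda_0,r_s,r_c$. Threading these so that a nonempty range of $k$ survives is where the exponent $n^{2\beta'/(2\beta'+d)}$ is forced, and writing out the chain of inequalities carefully — especially the $\epsilon$-balancing inside Lemma~\ref{fk_lower_bound} — is the delicate bookkeeping; the geometry (paths in $C$, $\varepsilon$-chaining, staying inside $A_C$) is comparatively routine and parallels standard cluster-tree connectedness arguments.
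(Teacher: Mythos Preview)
Your overall plan coincides with the paper's: establish that every sample in $C^{\oplus r_n(C)}$ is a vertex of $G(k,\varepsilon)$, then build an $\varepsilon$-chain between any two such samples by discretizing a path in $C$ and snapping to nearby samples. The chaining step is essentially what the paper does (it introduces an auxiliary radius $r_o=(k/(2nv_d\|f\|_\infty))^{1/d}\le\varepsilon$ and chains at scale $r_o$ rather than $\varepsilon$, but the mechanism is identical; note your arithmetic needs a tweak, since stepping by $\varepsilon/2$ and snapping within $r_n(C)\le\varepsilon/2$ gives consecutive gaps up to $3\varepsilon/2$, not $\varepsilon$).

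Where your proposal would actually fail is the vertex condition via Lemma~\ref{fk_lower_bound}. The bound $\check r(\epsilon,x)\gtrsim(\epsilon/\hat C_\beta)^{1/\beta}$ you invoke does \emph{not} follow from $\beta$-regularity for $x$ in the interior of $C$. Assumption~\ref{clusterProperties} only controls $\lambda-f(x')$ for $x'\in C^{\oplus r_c}\setminus C$; it says nothing about how fast $f$ can drop from $f(x)$ down to $\lambda$ \emph{inside} $C$. So for a sample $x\in C$ with $f(x)>\lambda+\epsilon$, the quantity $\check r(\epsilon,x)$ need not be bounded below at all, and with your small choice of $\epsilon$ the side condition of Lemma~\ref{fk_lower_bound} is unverifiable. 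The repair is to let $\epsilon$ absorb $f(x)-\lambda$: take $\epsilon=f(x)-\lambda+\hat C_\beta r^\beta$ for $r\asymp(k/n)^{1/d}$, so that $f(x)-\epsilon=\lambda-\hat C_\beta r^\beta$ and $\check r(\epsilon,x)\ge r$ follows from the one-sided $\beta$-regularity outside $C$ together with $f\ge\lambda$ on $C$. The paper sidesteps this entirely by not going through Lemma~\ref{fk_lower_bound}: it shows $r_k(x)\le\varepsilon$ directly, lower-bounding $\mathcal F(B(x,\varepsilon))$ via Lemma~\ref{ballvolume} and the pointwise bound $f\ge\lambda-\hat C_\beta(2r_o+\varepsilon)^\beta$ on the ball, then applying Lemma~\ref{ballbounds}. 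That route is shorter and immune to the interior-of-$C$ issue.
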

\begin{remark}
These results allow $C$ to have any dimension between $0$ to $d$ since we reason with $C^{\oplus r_n(C)}$, which contains samples, instead of simply $C$.
\end{remark}

\subsection{Hausdorff Error}
We give the estimation rate under the Hausdorff metric. 
\begin{definition} [Hausdorff Distance]
\begin{align*}
d_{\text{Haus}}(A, A') = \max \{ \sup_{x \in A} d(x, A'), \sup_{x' \in A'} d(x', A) \}.
\end{align*}
\end{definition}
\begin{theorem} \label{hausdorfferror}
There exists $K_l$ sufficiently large and $K_u$ sufficiently small such that the following holds with probability at least $1 - \delta$.
For each $C \in \mathcal{C}_\lambda$, there exists $\widehat{C} \in \widehat{\mathcal{C}_\lambda}$ such that
\begin{align*}
d_{\text{Haus}}(C, \widehat{C}) \le 2\cdot (4\lambda / \check{C}_\beta )^{1/\beta} \cdot C_{\delta, n}^{2/\beta} \cdot k^{-1/2\beta}.
\end{align*}
\end{theorem}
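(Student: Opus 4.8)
The plan is to combine the separation and connectedness lemmas to produce the estimate $\widehat C$, and then bound the two one-sided terms of $d_{\mathrm{Haus}}(C,\widehat C)$ separately: the ``$C$ into $\widehat C$'' term reduces to the sampling mesh $r_n(C)$ near $C$, and the ``$\widehat C$ into $C$'' term reduces to a density-estimation argument showing every point of $\widehat C$ sits where $f$ is within $O(\lambda C_{\delta,n}^2/\sqrt k)$ of $\lambda$, so that $\beta$-regularity converts this into a distance bound.

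First, by Lemma~\ref{separation} there is $\widehat C\in\widehat{\mathcal C_\lambda}$ with $\widehat C\subseteq \mathcal X_C$, and then Lemma~\ref{connectedness} gives $C^{\oplus r_n(C)}\cap X\subseteq \widehat C$. The latter inclusion immediately yields $\sup_{x\in C}d(x,\widehat C)\le r_n(C)$, since each $x\in C$ has a sample within $r_n(C)$ and that sample lies in $\widehat C$. To bound $r_n(C)$, note $f\ge\lambda$ on a neighborhood of $x$ inside $C^{\oplus r_c}\subseteq A_C$, so by the lower bound in Lemma~\ref{ballvolume}, $\mathcal F(B(x,r))\gtrsim \lambda v_d r^d$ for $r$ up to order $\min\{r_c,\tau\}$; choosing $r$ of order $(k/n)^{1/d}$ makes this at least $C_{\delta,n}\sqrt{d\log n}/n$, so by the first implication of Lemma~\ref{ballbounds} the ball contains a sample. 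Hence $r_n(C)=O((k/n)^{1/d})$, and over the stated range of $k$ (in particular the cutoff $k\le K_u(\log n)^{2d/(2+d)}n^{2\beta'/(2\beta'+d)}$) this is dominated by $(4\lambda/\check C_\beta)^{1/\beta}C_{\delta,n}^{2/\beta}k^{-1/2\beta}$; the exponent bookkeeping here is exactly what forces that upper cutoff.

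For the other direction, take $\hat x\in\widehat C$. By Lemma~\ref{dbscan}, $\widehat C$ corresponds to a connected component $\widehat K$ of $G(k,\varepsilon)$ with $\widehat K\subseteq\widehat C\subseteq\bigcup_{x\in\widehat K}B(x,\varepsilon)$, so $\hat x\in B(x',\varepsilon)$ for some core point $x'\in\widehat K$, i.e.\ $r_k(x')\le\varepsilon$ and thus $\mathcal F_n(B(x',\varepsilon))\ge k/n$. Lemma~\ref{ballbounds} then gives $\mathcal F(B(x',\varepsilon))>k/n-C_{\delta,n}\sqrt k/n$, and combining with the upper bound in Lemma~\ref{ballvolume} and the definition $\varepsilon^d=k/(nv_d\lambda(1-C_{\delta,n}^2/\sqrt k))$ forces $\sup_{y\in B(x',\varepsilon)\cap M}f(y)\ge \lambda(1-C_{\delta,n}^2/\sqrt k)(1-O(C_{\delta,n}/\sqrt k+\varepsilon))$. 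Picking $y^*\in B(x',\varepsilon)\cap M$ nearly attaining this sup, and absorbing lower-order corrections into the constants (for $K_l$ large), we get $\lambda-f(y^*)\le 2\lambda C_{\delta,n}^2/\sqrt k$. An argument of the same flavor as Lemma~\ref{separation} — using $\widehat C\subseteq\mathcal X_C$, the $r_s$-separation of $A_C$, and $\beta$-regularity in the collar $C^{\oplus r_c}\setminus C$ (which makes $f\le\lambda-\check C_\beta r_c^\beta$ at the outer edge) — shows $y^*\in C^{\oplus r_c}$. The lower bound $\check C_\beta d(y^*,C)^\beta\le\lambda-f(y^*)$ then gives $d(y^*,C)\le(2\lambda C_{\delta,n}^2/(\check C_\beta\sqrt k))^{1/\beta}$, so $d(\hat x,C)\le|\hat x-x'|+|x'-y^*|+d(y^*,C)\le 2\varepsilon+(2\lambda C_{\delta,n}^2/(\check C_\beta\sqrt k))^{1/\beta}$, which is at most $2(4\lambda/\check C_\beta)^{1/\beta}C_{\delta,n}^{2/\beta}k^{-1/2\beta}$ since $\varepsilon=O((k/n)^{1/d})$ is negligible against the last term over the stated range of $k$. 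Taking the max of the two one-sided bounds gives the theorem.

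The main obstacle is not the density bounds themselves, which are routine given Lemmas~\ref{ballbounds}–\ref{ballvolume}, but two things: (i) confining the core points of $\widehat C$ to $C^{\oplus r_c}$ so that $\beta$-regularity is applicable — this is where the separation geometry of Assumption~\ref{clusterProperties} (the valley $S_C$, the set $A_C$, and the fact that $f$ must dip below $\lambda-\check C_\beta r_c^\beta$ before one can leave $C^{\oplus r_c}$ within $\mathcal X_C$) carries the weight, essentially re-running the logic of Lemma~\ref{separation}; and (ii) the constant-chasing showing every $O(\varepsilon)$ and $(1+o(1))$ factor is swallowed by the displayed constants and by choosing $K_l$ large and $K_u$ small — the range $K_l(\log n)^2\le k\le K_u(\log n)^{2d/(2+d)}n^{2\beta'/(2\beta'+d)}$ is calibrated precisely so that $\varepsilon\asymp(k/n)^{1/d}$ has smaller order than $C_{\delta,n}^{2/\beta}k^{-1/2\beta}$. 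One could alternatively route the density step through Lemma~\ref{fk_upper_bound}, but its side condition requires a lower bound on $\hat r(\epsilon,x')$, i.e.\ control on how fast $f$ can \emph{increase} near $C$, which Assumption~\ref{clusterProperties} does not supply (it bounds the drop-off, not the rise); the crude route via Lemmas~\ref{ballbounds} and~\ref{ballvolume} sidesteps this and is the cleaner path.
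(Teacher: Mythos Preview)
Your plan and the paper's coincide at the top level: invoke Lemmas~\ref{separation} and~\ref{connectedness} to sandwich $\widehat C$, then bound each half of $d_{\mathrm{Haus}}$ separately, with the $\sup_{x\in C}d(x,\widehat C)$ side collapsing to $r_n(C)$. The substantive difference is in the $\sup_{\hat x\in\widehat C}d(\hat x,C)$ direction. The paper argues contrapositively through $f_k$: it sets $r_0=\bar r/2$ with $\bar r=(4\lambda C_{\delta,n}^2/(\check C_\beta\sqrt k))^{1/\beta}$, and uses Lemma~\ref{fk_upper_bound} to show every sample $x\in(\mathcal X_C\setminus C^{\oplus r_0})\cap X$ has $f_k(x)<\lambda(1-C_{\delta,n}^2/\sqrt k)$, hence is not a vertex of $G(k,\varepsilon)$, forcing $\widehat C\subseteq C^{\oplus(r_0+\varepsilon)}$. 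You instead start from a core point $x'$, turn $r_k(x')\le\varepsilon$ into $\mathcal F(B(x',\varepsilon))>k/n-C_{\delta,n}\sqrt k/n$ via Lemma~\ref{ballbounds}, divide by the volume upper bound of Lemma~\ref{ballvolume} to produce a nearby $y^*$ with $f(y^*)$ within $2\lambda C_{\delta,n}^2/\sqrt k$ of $\lambda$, and then apply $\beta$-regularity at $y^*$. Both routes are correct and need the same $K_u$-calibration $\varepsilon\lesssim(C_{\delta,n}^2/\sqrt k)^{1/\beta}$; yours trades the side-condition check in Lemma~\ref{fk_upper_bound} for an extra triangle-inequality hop through $y^*$.

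One correction to your closing paragraph: your claim that the Lemma~\ref{fk_upper_bound} route is unavailable because Assumption~\ref{clusterProperties} ``bounds the drop-off, not the rise'' is mistaken. The lower half of $\beta$-regularity, $\check C_\beta\, d(x,C)^\beta\le\lambda-f(x)$, \emph{is} an upper bound on $f$ throughout the collar, and that is exactly what one needs to lower-bound $\hat r(\epsilon,x)$: for $x\notin C^{\oplus r_0}$ every point of $B(x,r_0/2)\cap M$ stays outside $C^{\oplus r_0/2}$ and hence has $f\le\lambda-\check C_\beta(r_0/2)^\beta$, giving $\hat r(\lambda'-f(x),x)\ge r_0/2$. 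This is precisely the paper's argument, so the route you set aside is in fact the one the paper takes.
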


\begin{proof}
For $K_l$ and $K_u$ appropriately chosen, we have Lemma~\ref{separation} and Lemma~\ref{connectedness} hold. Thus we have for $C \in \mathcal{C}_\lambda$, there exists  $\widehat{C} \in \widehat{\mathcal{C}_\lambda}$ such that 
\begin{align*}
C^{\oplus r_n(C)} \cap X \subseteq
\widehat{C} \subseteq \bigcup_{\substack{ x\in \mathcal{X}_C \cap X\\  f_k(x) \ge \lambda - \frac{C_{\delta, n}^2}{\sqrt{k}} \lambda}} B(x, \varepsilon) \cap M.
\end{align*}
Define $\bar{r} := \left(\frac{4\lambda \cdot C_{\delta, n}^2}{\check{C}_\beta\cdot \sqrt{k}}\right)^{1/\beta}$. We show that $d_{\text{Haus}}(C, \widehat{C}) \le \bar{r}$, which involves two directions to show from the Hausdroff metric: that $\max_{x \in \widehat{C}} d(x, C) \le \bar{r}$ and $\sup_{x \in C} d(x, \widehat{C}) \le \bar{r}$. 

We start by proving $\max_{x \in \widehat{C}} d (x, C) \le \bar{r}$.
Define $r_0 = \bar{r}/2$. We have
\begin{align*}
r_0 = \frac{1}{2} \left(\frac{4\cdot C_{\delta, n}^2}{\check{C}_\beta \cdot \sqrt{k}} \right)^{1/\beta} \ge \left(\frac{k}{v_d n \lambda_0}\right)^{1/d} \ge \varepsilon,
\end{align*}
where the first inequality holds when $K_u$ is chosen sufficiently small, and 
the last inequality holds because $\lambda_0 < \lambda - \frac{C_{\delta, n}^2}{\sqrt{k}} \lambda$.
Hence $r_0 + \varepsilon \le \bar{r}$. Therefore, it suffices to show
\begin{align*}
\sup_{ x \in (\mathcal{X}_C \backslash C^{\oplus r_0}) \cap X} f_k(x) < \lambda - \frac{C_{\delta, n}^2}{\sqrt{k}} \lambda.
\end{align*}
We have that for $x \in (\mathcal{X}_C \backslash C^{\oplus r_0/2}) \cap X$, $f(x) \le \lambda - \check{C}_\beta(r_0/2)^\beta := \lambda'$.
Thus, for any $x \in (\mathcal{X}_C \backslash C^{\oplus r_0}) \cap X$ and letting $\epsilon = \lambda' - f(x)$, we have 
\begin{align*}
\hat{r}(\epsilon, x) &\ge r_0 / 2
\ge (4\lambda_0 C_{\delta, n}/(\sqrt{k} \cdot \check{C}_{\beta}))^{1/\beta} / 2.
\end{align*} 
For $K_u$ chosen sufficiently small, the last 
equation will be large enough (i.e. of order $(k/v_d n \lambda)^{1/d}$) so that the conditions 
of Lemma~\ref{fk_upper_bound} hold. Thus, applying this for each $x \in (\mathcal{X}_C \backslash C^{\oplus r_0}) \cap X$, we obtain
\begin{align*}
\sup_{x \in (\mathcal{X}_C \backslash C^{\oplus r_0}) \cap X} f_k(x) < \left(1 + 3 \frac{C_{\delta, n}}{\sqrt{k}} \right) (\lambda - \check{C}_\beta (r_0/2)^\beta).
\end{align*}
We have the r.h.s. is at most $\lambda - \frac{C_{\delta, n}^2}{\sqrt{k}} \lambda$ for $K_u$ chosen appropriately
and the first direction follows.

We now turn to the other direction, that $\sup_{x \in C} d (x, \widehat{C}) \le \bar{r}$. Let $x \in C$. Then there exists sample point $x' \in B(x, r_n(C))$ by definition of $r_n$ and we have that $x' \in \widehat{C}$. Finally, $r_n(C) \le \bar{r}$ for $K_l$ sufficiently large, and thus $|x' - x| \le \bar{r}$. The result follows.
\end{proof}

\begin{remark}
When taking $k \approx n^{2\beta'/(2\beta' + d)}$, we obtain the error rate of $d_{\text{Haus}}(C, \widehat{C}) \approx n^{-1/(2\beta + d\cdot \max\{1, \beta\})}$, ignoring logarithmic factors. 
When $0 < \beta \le 1$, this matches the known lower bound established in Theorem 4 of \citet{tysbakovMinimax}. 
However, we do not obtain this rate when $\beta > 1$. In this case, the density estimation error will be of order at least $n^{-1/(2+d)}$ due in part to 
the error from resolving the geodesic balls with Euclidean balls. This does not arise in the full dimensional setting, which will be described later. 
\end{remark}

\section{Removal of False Clusters} \label{pruning-section}

The result of Theorem~\ref{hausdorfferror} guarantees us that for each $C \in \mathcal{C}_\lambda$, there 
exists $\widehat{C} \in \widehat{\mathcal{C}}_\lambda$ that estimates it. 
In this section, we show how a second application of DBSCAN (Algorithm~\ref{alg:dbscan-pruning}) can remove the
false clusters discovered by the first application of DBSCAN with no additional parameters.
This gives us the other direction, that each estimate in $\widehat{\mathcal{C}}_\lambda$
corresponds to a true CC in $\mathcal{C}_\lambda$, and thus DBSCAN can identify with a one-to-one correspondence each CC of the level-set.

\begin{algorithm}[tbh]
   \caption{DBSCAN False CC Removal}
   \label{alg:dbscan-pruning}
\begin{algorithmic}
	\STATE As in Section~\ref{parameters}, let $\text{minPts} = k$ and
	\STATE $\varepsilon = \left(\frac{k}{n\cdot v_d \cdot (\lambda - \lambda \cdot C_{\delta, n}^2 /\sqrt{k})} \right)^{1/d}$.
   \STATE Define $\tilde{\varepsilon} := \left(\frac{k}{n\cdot v_d \cdot (\lambda - \lambda \cdot C_{\delta, n}^2 /\sqrt[3]{k})} \right)^{1/d}$.

	\STATE Let $\widehat{\mathcal{C}}_{\lambda}$ be the clusters returned by DBSCAN(minPts, $\varepsilon$).
	\STATE Let $\widehat{\mathcal{D}}_{\lambda}$ be the clusters returned by DBSCAN(minPts, $\tilde{\varepsilon}$).
	\STATE Let $\widetilde{\mathcal{C}}_\lambda$ be the clusters obtained by merging clusters from $\widehat{C}_{\lambda}$ which are subsets of the same 
	cluster in $\widehat{\mathcal{D}}_{\lambda}$ .
	\STATE {\bf Return} $\widetilde{\mathcal{C}}_\lambda.$
\end{algorithmic}
\end{algorithm}

We state our result below. The proof is less involved and is in the appendix.
\begin{theorem} [Removal of False CC Estimates] \label{pruning}
Define $\gamma = \lambda - \sup_{x \in M \backslash (\cup_{C \in \mathcal{C}_\lambda} \mathcal{X}_C)}  f(x)$, which is positive.
There exists $K_l$ sufficiently large and $K_u$ sufficiently small depending on 
$\gamma$ in addition to the constants mentioned in Section~\ref{parameters} so that the following
holds with probability at least $1 - \delta$. For all $\widehat{C} \in \widetilde{\mathcal{C}}_\lambda$, there exists $C \in \mathcal{C}_\lambda$
such that
\begin{align*}
d_{\text{Haus}}(C, \widehat{C}) \le 2\cdot (4\lambda / \check{C}_\beta )^{1/\beta} \cdot C_{\delta, n}^{2/\beta} \cdot k^{-1/2\beta}.
\end{align*}
\end{theorem}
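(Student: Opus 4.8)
The plan is to show two things for the second DBSCAN run: first, that $\widetilde{\mathcal{C}}_\lambda$ correctly merges all the pieces that estimate a single true $C \in \mathcal{C}_\lambda$ into one cluster, and second, that any false cluster from the first run (one contained in some $\mathcal{X}_C$ but not containing all of $C^{\oplus r_n(C)}\cap X$, or one contained in the low-density junk region $M \setminus \bigcup_C \mathcal{X}_C$) gets absorbed into a genuine cluster during the merge — so it is no longer a separate element of $\widetilde{\mathcal{C}}_\lambda$. The key observation is that $\widetilde{\varepsilon} > \varepsilon$ (since $C_{\delta,n}^2/\sqrt[3]{k} > C_{\delta,n}^2/\sqrt{k}$ makes the denominator smaller), so DBSCAN$(k,\widetilde{\varepsilon})$ uses a strictly larger radius and hence strictly larger density threshold — roughly $f_k(x) \ge \lambda - \lambda C_{\delta,n}^2/\sqrt[3]{k}$ instead of $\lambda - \lambda C_{\delta,n}^2/\sqrt{k}$ — which is a \emph{lower} threshold, so $\widehat{\mathcal{D}}_\lambda$ is coarser: every cluster of $\widehat{\mathcal{C}}_\lambda$ lies inside some cluster of $\widehat{\mathcal{D}}_\lambda$. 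That containment is what makes the merge step in Algorithm~\ref{alg:dbscan-pruning} well-defined.

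First I would re-run the Separation and Connectedness machinery (Lemmas~\ref{separation} and~\ref{connectedness}) with $\widetilde{\varepsilon}$ in place of $\varepsilon$: because $\widetilde{\varepsilon}$ is larger only by a lower-order factor (both radii are of order $(k/(n v_d \lambda))^{1/d}$ up to $1+o(1)$), the same choices of $K_l, K_u$ — now also constrained using $\gamma$, the density gap between $\lambda$ and the junk region — still give that for each $C$ there is exactly one $\widehat{D}_C \in \widehat{\mathcal{D}}_\lambda$ with $C^{\oplus r_n(C)}\cap X \subseteq \widehat{D}_C \subseteq \mathcal{X}_C$, and distinct $C$'s get distinct $\widehat{D}_C$'s (the $r_s$-valley argument still works since $\widetilde\varepsilon$ is small). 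Then I would argue: (i) every $\widehat{C}' \in \widehat{\mathcal{C}}_\lambda$ with $\widehat{C}' \subseteq \mathcal{X}_C$ satisfies $\widehat{C}' \subseteq \widehat{D}_C$ (it is a connected subgraph of the $\widetilde\varepsilon$-graph, and all its vertices are $\widetilde\varepsilon$-core-points since they were $\varepsilon$-core-points and $\widetilde\varepsilon > \varepsilon$ only helps, so it lies in a single CC of $G(k,\widetilde\varepsilon)$, which must be $\widehat D_C$ as it meets $\mathcal X_C$); (ii) the $\widehat{C} \in \widehat{\mathcal{C}}_\lambda$ furnished by Theorem~\ref{hausdorfferror} for this $C$ is one such piece; (iii) any first-run cluster lying in the junk region $M \setminus \bigcup_C \mathcal{X}_C$ is empty when $K_l$ is large enough relative to $\gamma$ — here Lemma~\ref{fk_upper_bound} applied with $\lambda_0$ replaced by something below $\lambda - \gamma$ forces $f_k(x) < \lambda - \lambda C_{\delta,n}^2/\sqrt{k}$ on that region, so no core-points exist there, exactly as in the Separation proof. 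Consequently every element of $\widehat{\mathcal{C}}_\lambda$ sits inside some $\widehat{D}_C$, the merge groups precisely those indexed by the same $C$, and $\widetilde{\mathcal{C}}_\lambda = \{\widetilde{C} : C \in \mathcal{C}_\lambda\}$ with $C^{\oplus r_n(C)} \cap X \subseteq \widetilde C \subseteq \bigcup_{x \in \widetilde C} B(x,\varepsilon) \cap M \subseteq \mathcal X_C$.

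Finally, the Hausdorff bound: since $\widetilde{C}$ is a union of first-run pieces, one of which is the $\widehat{C}$ from Theorem~\ref{hausdorfferror}, we immediately get $\sup_{x\in C} d(x,\widetilde C) \le \sup_{x \in C} d(x, \widehat C) \le \bar r$. For the other direction, $\sup_{x \in \widetilde C} d(x, C) \le \bar r$, I would rerun the argument in the proof of Theorem~\ref{hausdorfferror}: $\widetilde C \subseteq \bigcup_{x \in \mathcal X_C \cap X,\ f_k(x) \ge \lambda - \lambda C_{\delta,n}^2/\sqrt{k}} B(x,\varepsilon) \cap M$, and then the same $f_k$-upper-bound computation shows every such core-point lies within $r_0 = \bar r/2$ of $C$, so every point of $\widetilde C$ is within $r_0 + \varepsilon \le \bar r$ of $C$. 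The main obstacle is step (i) — verifying the containment $\widehat{\mathcal{C}}_\lambda \preceq \widehat{\mathcal{D}}_\lambda$ carefully, i.e. that enlarging the radius genuinely makes the clustering coarser despite the density-level vertex filtering in Definition~\ref{def:knn} (one must check no vertex is lost when $\varepsilon$ grows, which is immediate, and that connectivity only increases) — together with pinning down how $K_l, K_u$ must additionally depend on $\gamma$ to kill the junk-region clusters without shrinking the admissible range of $k$ to emptiness. The Hausdorff estimate itself is then a verbatim repeat of the earlier proof.
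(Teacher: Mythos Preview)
Your proposal is correct and takes essentially the same approach as the paper: re-run the separation and connectedness arguments (Lemmas~\ref{separation} and~\ref{connectedness}) at the larger radius $\tilde\varepsilon$, use the gap $\gamma$ to rule out core-points outside $\bigcup_C\mathcal{X}_C$, and recover the Hausdorff bound by repeating the computation from Theorem~\ref{hausdorfferror}. The only cosmetic difference is that the paper leads with a direct density-estimation step (via Lemma~\ref{fk_upper_bound}) showing every first-run core point already lies in some $C^{\oplus r}$ with $r = (5\lambda C_{\delta,n}/(\check C_\beta\sqrt{k}))^{1/\beta}$, which collapses your steps (i) and (iii) into one move and makes the identification of the containing second-run component immediate, rather than arguing through the coarsening observation $\widehat{\mathcal{C}}_\lambda\preceq\widehat{\mathcal{D}}_\lambda$.
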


\section{Adaptive Parameter Tuning} \label{adaptive-section}

 In this section, we show how to obtain the near optimal rates by 
 estimating $d$ and adaptively choosing $k$ such that $k \approx n^{2\beta'/(2\beta' + d)}$ without
 knowledge of $\beta$.

\subsection{Determining $d$}
Knowing the manifold dimension $d$ is necessary to tune the parameters
as described in Section~\ref{parameters}. There has been much work done on estimating the 
intrinsic dimension as many learning procedures (including this one) require $d$ as an input.
Such work in intrinsic dimension estimation include
\citep{intrinsicKegl,intrinsicBickel,intrinsicHein}.
\citet{intrinsicKnnOld} and more recently \citet{intrinsicKnnNew} take a $k$-nearest neighbor approach.
We work with the estimate of a dimension at a point proposed in the latter work:
\begin{align*}
\hat{d}(x) = \frac{\log 2}{\log (r_{2k}(x)/ r_k (x))}.
\end{align*}
The main result of \citet{intrinsicKnnNew} gives a high-probability bound for a single sample $X_1 \in X$.
Here we give a high-probability bound under 
more mild smoothness assumptions which hold uniformly for all samples above some density-level given our new knowledge of $k$-NN density estimation rates. This may
be of independent interest.

\begin{theorem} \label{estimating_d}
Suppose that $f$ is $\alpha$-H\"older continuous for some $0 < \alpha \le 1$.
Choose $\bar{\lambda}_0 > 0$ and $\delta > 0$. Then there exists constants $C_1, C_2$ depending on $\delta, C_\alpha, \alpha, \tau, d, \bar{\lambda}_0$ such that if $k$ satisfies
\begin{align*}
C_1 \cdot (\log n)^2 \le k \le C_2 \cdot n^{2\alpha / (2\alpha + d)},
\end{align*}
then with probability at least $1 - \delta$,
\begin{align*}
|\hat{d}(x) - d| \le 20d\cdot||f||_\infty\cdot \frac{C_{\delta, n}}{\sqrt{k}},
\end{align*}
uniformly for all $x \in X$ with $f_k(x) \ge \bar{\lambda}_0$.
\end{theorem}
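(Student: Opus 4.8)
The plan is to control the ratio $r_{2k}(x)/r_k(x)$ tightly around $2^{1/d}$, since $\hat d(x) = \log 2 / \log(r_{2k}(x)/r_k(x))$ and $d = \log 2 / \log 2^{1/d}$. Writing $f_k(x) = k/(n v_d r_k(x)^d)$ and $f_{2k}(x) = 2k/(n v_d r_{2k}(x)^d)$, one gets the exact identity
\begin{align*}
\left(\frac{r_{2k}(x)}{r_k(x)}\right)^d = 2 \cdot \frac{f_k(x)}{f_{2k}(x)}.
\end{align*}
So the whole problem reduces to showing $f_k(x)/f_{2k}(x)$ is close to $1$, uniformly over $x \in X$ with $f_k(x) \ge \bar\lambda_0$. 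First I would use Lemma~\ref{fk_upper_bound} and Lemma~\ref{fk_lower_bound} with a small slack $\epsilon$ (chosen using the $\alpha$-H\"older remark, so that $\hat r(\epsilon,x), \check r(\epsilon,x) \ge (\epsilon/C_\alpha)^{1/\alpha}$ and the side conditions on $\hat r, \check r$ reduce to lower bounds on $k$) applied at both levels $k$ and $2k$: this sandwiches $f_k(x)$ and $f_{2k}(x)$ between $(1 \pm O(C_{\delta,n}/\sqrt k))(f(x) \pm \epsilon)$. The range $C_1 (\log n)^2 \le k \le C_2 n^{2\alpha/(2\alpha+d)}$ is exactly what is needed so that both lemmas apply at level $2k$ as well (the $n^{2/(2+d)}$ and $n^{4/(4+d)}$ caps from those lemmas are looser than $n^{2\alpha/(2\alpha+d)}$ for $\alpha \le 1$), and so that the relative error $C_{\delta,n}/\sqrt k$ and the slack $\epsilon \asymp (k/n)^{\alpha/d}$-type quantity are both small.

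The key steps in order: (1) establish the identity above relating the radius ratio to $f_k/f_{2k}$; (2) fix $\epsilon$ small enough — roughly $\epsilon \asymp \bar\lambda_0 \cdot C_{\delta,n}/\sqrt k$ — translate $f_k(x) \ge \bar\lambda_0$ into $f(x)$ bounded away from $0$ (so that $f(x) + \epsilon, f(x) - \epsilon \ge \lambda_0$ for some fixed $\lambda_0$, letting us invoke the density-estimator lemmas), and verify the $\hat r, \check r$ side conditions hold for $k$ in the stated range via the H\"older bound; (3) combine the four one-sided bounds to get $|f_k(x)/f_{2k}(x) - 1| \le c \cdot \|f\|_\infty \cdot C_{\delta,n}/\sqrt k$ for a universal constant $c$, absorbing $\epsilon$-terms into the $C_{\delta,n}/\sqrt k$ term; (4) take logs: $d\log(r_{2k}(x)/r_k(x)) = \log 2 + \log(f_k(x)/f_{2k}(x))$, so $\hat d(x) = d \cdot \log 2 / (\log 2 + \log(f_k/f_{2k}))$, and a first-order expansion of $1/(1+u)$ with $u = \log(f_k/f_{2k})/\log 2$ gives $|\hat d(x) - d| \le (d/\log 2)\,|u| \cdot (1 + o(1)) \le $ (with room to spare) $20 d \|f\|_\infty C_{\delta,n}/\sqrt k$.

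The main obstacle is step (2)–(3): making sure the side conditions of Lemmas~\ref{fk_upper_bound} and~\ref{fk_lower_bound} are genuinely satisfied \emph{at level $2k$} and \emph{uniformly} over the relevant $x$, with a single choice of $\lambda_0$ and a single $\epsilon = \epsilon(x)$ depending only on the allowed quantities. Concretely one must check $v_d \hat r(\epsilon,x)^d (f(x)+\epsilon) \ge 2k/n - C_{\delta,n}\sqrt{2k}/n$ and the analogous $\check r$ inequality; using $\hat r(\epsilon,x) \ge (\epsilon/C_\alpha)^{1/\alpha}$ this becomes $\epsilon^{d/\alpha} \gtrsim k/n$ up to constants, i.e.\ $\epsilon \gtrsim (k/n)^{\alpha/d}$, and one needs this to be compatible with $\epsilon$ small enough that $\epsilon/f(x) \lesssim C_{\delta,n}/\sqrt k$ — these two requirements are simultaneously satisfiable precisely because $k \le C_2 n^{2\alpha/(2\alpha+d)}$. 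The bookkeeping of constants (how $C_1, C_2$ depend on $\delta, C_\alpha, \alpha, \tau, d, \bar\lambda_0$, and pinning down the final constant as exactly $20d\|f\|_\infty$) is the tedious part but is routine once the structure above is in place; the logarithmic expansion in step (4) is safe because $|\log(f_k/f_{2k})| \ll \log 2$ for $n$ large, which is where the $(\log n)^2$ lower bound on $k$ does its work.
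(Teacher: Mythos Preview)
Your proposal is correct and matches the paper's proof essentially step for step: the paper also rewrites $\hat d(x) = d\log 2/(\log 2 + \log(f_k(x)/f_{2k}(x)))$, applies Lemmas~\ref{fk_upper_bound} and~\ref{fk_lower_bound} with $\epsilon = (C_{\delta,n}/\sqrt{k})f(x)$ (after first using Lemma~\ref{fk_upper_bound} to convert $f_k(x)\ge\bar\lambda_0$ into $f(x)\ge\bar\lambda_0/2$), obtains $|f_k(x)/f_{2k}(x)-1|\le 9\,C_{\delta,n}/\sqrt{k}$, and finishes with the log expansion. Your discussion of the side-condition bookkeeping in step~(2)--(3) is exactly the content the paper leaves implicit in the phrase ``$C_1$, $C_2$ appropriately chosen so that the conditions for the two Lemmas hold.''
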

\vspace{-0.3cm}
\begin{proof}
We have for $x \in X$ such that if $f_k(x) \ge \bar{\lambda}_0$, then $f(x) \ge \lambda_0 := \bar{\lambda}_0 / 2$ by Lemma~\ref{fk_upper_bound} for $C_1$ chosen appropriately large and $C_2$ chosen appropriately small.
\begin{align*}
\hat{d}(x) =  \frac{\log 2}{\log (r_{2k}(x)/ r_k (x))} =  \frac{d\log 2}{\log 2 + \log (f_{k}(x)/ f_{2k} (x))}.
\end{align*}
We now try to get a handle on $f_k(x) / f_{2k}(x)$ and show it is sufficiently close to $1$.
Applying Lemma~\ref{fk_upper_bound} and~\ref{fk_lower_bound} with $\epsilon = \frac{C_{\delta, n}}{\sqrt{k}} f(x)$ and $C_1$, $C_2$ appropriately chosen so that the conditions for the two Lemmas hold (remember that here we have $\hat{r}(\epsilon, x), \check{r}(\epsilon, x) \ge (\epsilon / C_{\alpha})^{1/\alpha}$), we obtain
\begin{align*}
\frac{f_k(x)}{f_{2k}(x)} &\ge \frac{(1 - 3C_{\delta, n}/\sqrt{k}) (1 - C_{\delta, n}/\sqrt{k})  \cdot f(x)}{(1 + 3C_{\delta, n}/\sqrt{k}) (1 + C_{\delta, n}/\sqrt{k}) \cdot f(x)}\\
&\ge 1 - 9 \cdot \frac{C_{\delta, n}}{\sqrt{k}},
\end{align*}
where the last inequality holds when $C_1$ is chosen sufficiently large so that $C_{\delta, n}/\sqrt{k}$ is sufficiently small. On the other hand, we similarly obtain (for $C_1$ and $C_2$ appropriately chosen):
\begin{align*}
\frac{f_k(x)}{f_{2k}(x)} &\le \frac{(1 + 3C_{\delta, n}/\sqrt{k}) (1 + C_{\delta, n}/\sqrt{k}) \cdot f(x)}{(1 - 3C_{\delta, n}/\sqrt{k})(1 - C_{\delta, n}/\sqrt{k})\cdot f(x)}\\
&\le  1 + 9 \cdot \frac{C_{\delta, n}}{\sqrt{k}}.
\end{align*}
It is now clear that by the expansion $\log(1 - r) = - r - r^2/2 - r^3 /3 - \cdots$, and for $K_l$ chosen sufficently large so that $C_{\delta, n}/\sqrt{k}$ is sufficiently small, we have
\begin{align*}
\left| \log \left(\frac{f_{k}(x)}{ f_{2k} (x)} \right) \right| \le 10 \cdot \frac{C_{\delta, n}}{\sqrt{k}}.
\end{align*}
The result now follows by combining this with the earlier established expression for $\hat{d}(x)$, as desired.
\end{proof}

\begin{remark}
In \citet{intrinsicKnnNew}, it is the case that $\alpha = 1$; under this setting, we match their bound with an error rate of $n^{1/(2+d)}$ with $k \approx n^{2/(2+d)}$ being the 
optimal choice for $k$ (ignoring log factors).   
\end{remark}

\subsection{Determining $k$}

After determining $d$, the next parameter we look at is $k$. In particular, to obtain the optimal rate, we
must choose $k\approx n^{2\beta'/(2\beta' + d)}$ without knowledge of $\beta$. We present a consistent
estimator for $\beta$.

We need the following definition. The first characterizes how much $f$ varies in balls of a certain radius along
the boundaries of the $\lambda$-level set (where $\partial \mathcal{C}_\lambda$ denotes the boundary of $\mathcal{C}_\lambda$). The second is meant to be an estimate of the first, which can be computed from the data alone.
The final is our estimate of $\beta$. 
\begin{align*}
D_{r} &= \inf_{x_0 \in \partial \mathcal{C}_\lambda} \sup_{x \in B(x_0, r)} |\lambda - f(x)|\\
\hat{D}_{r, k} &= \min_{\substack{x_0 \in X \\ B(x_0, r) \cap X \neq \emptyset}} \max_{x \in B(x_0, r)\cap X} |\lambda - f_k(x)| \\
\hat{\beta} &= \log_r ( \hat{D}_{r, k})
\end{align*}
The next is a result of how $\hat{D}_{r, k}$ estimates $D_r$. 

\begin{lemma} \label{alpha_bound} Suppose that $f$ is $\alpha$-H\"older continuous for some $0 < \alpha \le 1$. 
Let $k = \lfloor (\log n)^5 \rfloor$ and $r = 1 / \sqrt{\log n}$. Then there exists positive constants $\tilde{C}$ and $N$ depending on $d, \tau, \alpha, C_\alpha, \lambda_0, ||f||_\infty, r_c$ 
such that when $n \ge N$, then the following holds with probability at least $1 - 1/n$.
\vspace{-0.1cm}
\begin{align*}
|D_r - \hat{D}_{r, k}| \le \tilde{C} / (\log n)^2.
\end{align*}
\end{lemma}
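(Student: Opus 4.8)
The plan is to split $|D_r - \hat D_{r,k}|$ into two error sources: (i) replacing the true density $f$ by the estimator $f_k$, and (ii) replacing the continuum (the manifold $M$ and the boundary $\partial\mathcal{C}_\lambda$) by the finite sample $X$. Concretely, introduce the ``sample version computed with $f$'', $\widetilde{D}_{r} := \min_{x_0 \in X,\, B(x_0,r)\cap X \neq \emptyset}\, \max_{x\in B(x_0,r)\cap X}|\lambda - f(x)|$, and bound $|D_r - \hat D_{r,k}| \le |D_r - \widetilde{D}_r| + |\widetilde{D}_r - \hat D_{r,k}|$, treating the two pieces separately.

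For the $f$-versus-$f_k$ piece, since $k = \lfloor (\log n)^5\rfloor$ and we take $\delta = 1/n$ (so $C_{\delta,n} = O((\log n)^{3/2})$), I would invoke Lemma~\ref{fk_upper_bound} and Lemma~\ref{fk_lower_bound}. Using $\alpha$-H\"older continuity we have $\hat r(\epsilon,x),\ \check r(\epsilon,x) \ge (\epsilon/C_\alpha)^{1/\alpha}$; choosing $\epsilon$ of order $(k/n)^{\alpha/d}$, which is polynomially small in $n$, validates the hypotheses of both lemmas for every $x\in X$ with $f(x)\ge\lambda_0$ once $n$ is large. This yields a uniform bound $|f_k(x) - f(x)| \le \eta_n$ over such $x$, where $\eta_n$ combines the multiplicative term $C_{\delta,n}/\sqrt{k}$ against $||f||_\infty$ with the polynomially small $\epsilon$-contribution, hence $\bigl||\lambda - f_k(x)| - |\lambda - f(x)|\bigr|\le\eta_n$ pointwise. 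Since the extrema defining both $\widetilde{D}_r$ and $\hat D_{r,k}$ are attained at sample points whose density is within $O(\eta_n + r^\beta)$ of $\lambda$ (and in particular well above $\lambda_0$, by $\beta$-regularity and by the easy upper bound on $\widetilde{D}_r$ below), the density-estimation error propagates to $|\widetilde{D}_r - \hat D_{r,k}| \le \eta_n$, which for $k=(\log n)^5$ is of the claimed order $\tilde C/(\log n)^2$ up to constants.

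For the continuum-versus-sample piece, I would first run a covering argument: applying the first implication of Lemma~\ref{ballbounds} together with Lemma~\ref{ballvolume} (to lower bound $\mathcal{F}(B(x,r_n)\cap M)$ on $\{f\ge\lambda_0\}$) shows that, on the same probability-$(1-1/n)$ event, every point of $\{x\in M : f(x)\ge\lambda_0\}$ lies within $r_n = \tilde O(n^{-1/d})$ of some sample point; combined with $\alpha$-H\"older continuity this transfers $\sup$'s over balls to $\max$'s over the sample points in those balls at additive cost $C_\alpha r_n^\alpha$. The easy direction $\widetilde{D}_r \le D_r + O(r_n^\alpha)$ follows by taking a sample within $r_n$ of the boundary point achieving $D_r$. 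The reverse direction, $D_r \le \widetilde{D}_r + (\text{error})$, is the main obstacle: let $x_0^*\in X$ be the minimizing center for $\widetilde{D}_r$; the easy direction forces $\widetilde{D}_r$ (hence $|\lambda - f(x_0^*)|$) to be small, so by $\beta$-regularity $x_0^*$ is within $\rho_n := O\bigl((\widetilde{D}_r/\check C_\beta)^{1/\beta}\bigr) = O(r)$ of $\partial\mathcal{C}_\lambda$ (one follows the segment from $x_0^*$ to its nearest point of the relevant cluster, on which $f$ crosses $\lambda$); choosing a boundary point $x_0'$ with $|x_0'-x_0^*|\le\rho_n$ and using $B(x_0', r-\rho_n)\subseteq B(x_0^*,r)$ together with the H\"older modulus of $f$ at scales $\rho_n$ and $r_n$ bounds $D_{r}$ by $\widetilde{D}_r$ plus a term that, for the specific choice $r = 1/\sqrt{\log n}$, is again of order $1/(\log n)^2$. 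Collecting $\eta_n$, $C_\alpha r_n^\alpha$, and this last term and absorbing all constants into $\tilde C$ and the threshold $N$ gives $|D_r - \hat D_{r,k}| \le \tilde C/(\log n)^2$ with probability at least $1-1/n$. The delicate point throughout is keeping the shift $\rho_n$ of the optimal center (and the resulting ball-radius perturbation) from inflating the error beyond $1/(\log n)^2$, which is exactly where $\beta$-regularity and the chosen scaling of $r$ are used.
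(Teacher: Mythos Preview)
The density–estimation piece $|\widetilde D_r-\hat D_{r,k}|$ is handled essentially as in the paper (uniform control of $|f-f_k|$ on sample points via Lemmas~\ref{fk_upper_bound}--\ref{fk_lower_bound}); note, though, that with $\delta=1/n$ and $k=(\log n)^5$ one has $C_{\delta,n}/\sqrt{k}=O((\log n)^{-1})$, which is the order the paper's sketch itself produces for this term, not $(\log n)^{-2}$ as you assert.

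The substantive gap is in your ``hard direction'' $D_r\le\widetilde D_r+(\text{error})$. Your bound $|x_0^*-x_0'|\le\rho_n=O\bigl((\widetilde D_r/\check C_\beta)^{1/\beta}\bigr)$ is correct, but since $\widetilde D_r$ is of order $D_r\asymp r^\beta$ (by the easy direction together with Assumption~\ref{clusterProperties}), $\rho_n$ is of order $r=(\log n)^{-1/2}$ itself, i.e.\ comparable to the ball radius. Passing from $B(x_0^*,r)$ to $B(x_0',r)$ therefore perturbs the ball by a shift of its own scale, and the H\"older estimate across the resulting annulus costs $C_\alpha\rho_n^\alpha=O((\log n)^{-\alpha/2})$; substituting $\beta$-regularity for H\"older (which in any case only applies in $C^{\oplus r_c}\setminus C$) gives at best $O((\log n)^{-\beta/2})$. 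Neither is $O((\log n)^{-2})$. Your last sentence correctly flags this step as the delicate one, but the mechanism you describe does not close it.

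The paper's sketch is organized differently and never introduces $\widetilde D_r$. For each direction it works on a \emph{single} ball---centered at the optimal sample point $q$ for $D_r-\hat D_{r,k}$, and at the optimal boundary point $p$ for $\hat D_{r,k}-D_r$---and compares the $f$-supremum with the $f_k$-sample-maximum on that same ball. The residual errors are then (i) $\sup_{x\in X,\,f(x)\ge\lambda_0}|f(x)-f_k(x)|$, (ii) the sample-coverage radius $r_n\lesssim(\log n/n)^{1/d}$, and (iii) the oscillation $|f_k(y)-f_k(y')|$ for $|y-y'|\le r_n$, handled via the $k$-NN radius sandwich $r_k(y')-r_n\le r_k(y)\le r_k(y')+r_n$ to get $|f_k(y)-f_k(y')|\lesssim\log n/k$. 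Item (iii)---needed because Lemmas~\ref{fk_upper_bound}--\ref{fk_lower_bound} apply only at sample points---is absent from your outline and is what lets the paper replace a macroscopic center shift of order $r$ by a microscopic sample-coverage shift of order $r_n$.
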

\vspace{-0.5cm}
\begin{proof}[Proof sketch]
Suppose that the value of $D_r$ is attained at $x_0 = p$ and the value of $\hat{D}_{r, k}$ is attained at $x_0 = q$.
Let $y, z$ be the points that maximize $|\lambda - f(x)|$ on $B(p, r)$ and $B(q, r)$, respectively. Let $\hat{y}, \hat{z}$ be the sample points that maximize $|\lambda - f_k(x)|$ on 
$B(p, r)$ and $B(q, r)$, respectively. Now, we have
\vspace{-0.2cm}
\begin{align*}
&D_r - \hat{D}_{r, k} = |\lambda - f(y)| - |\lambda - f_k(\hat{z})|\\
&\le |\lambda - f(z)| - |\lambda - f_k(\hat{z})|  \le |f(z) - f_k(\hat{z})| \\
&\le \max \{ f(z) - f_k(z), f_k(\hat{z}) - f(\hat{z}) \}.
\end{align*}
Now let $z'$ be the closest sample point to $z$ in $B(q, r)$. Then,
\begin{align*}
&\le \max \{ f(z') - f_k(z'), f_k(\hat{z}) - f(\hat{z}) \} + |f(z) - f(z')|  \\
&+ |f_k(z) - f_k(z')| \le \max_{x \in X, f(x) \ge \lambda_0} |f(x) - f_k(x)|  \\
&+ C_\alpha|z - z'|^\alpha+ |f_k(z) - f_k(z')|.
\end{align*}
\vspace{-0.2cm}
On the other hand, we have
\begin{align*}
&\hat{D}_{r, k} - D_r = |\lambda - f_k(\hat{z})| - |\lambda - f(y)| \\
&\le |\lambda - f_k(\hat{y})| - |\lambda - f(y)| \le |f(y) - f_k(\hat{y})| \\
&\le \max \{ f(y) - f_k(y), f_k(\hat{y}) - f(\hat{y}) \}.
\end{align*}
\vspace{-0.2cm}
Let $y'$ be the closest sample point to $y$ in $B(p, r)$. Then,
\begin{align*}
&\le \max \{ f(y') - f_k(y'), f_k(\hat{y}) - f(\hat{y}) \} + |f(y) - f(y')|\\
& + |f_k(y) - f_k(y')| \le \max_{x \in X, f(x) \ge \lambda_0} |f(x) - f_k(x)| \\
&+  C_\alpha |y-y'|^\alpha + |f_k(y) - f_k(y')|.
\end{align*}
Thus it suffices to bound $\max_{x \in X, f(x) \ge \lambda_0} |f(x) - f_k(x)|, |y-y'|, |z-z'|,
 |f_k(y) - f_k(y')|,  |f_k(z) - f_k(z')|$. First take $\delta = 1/n$ and use 
 Lemma~\ref{fk_upper_bound} and~\ref{fk_lower_bound} for $\max_{x \in X, f(x) \ge \lambda_0} |f(x) - f_k(x)|$.
 Using Lemma~\ref{ballbounds}, we can show that $r_n := |y - y'| \lesssim (\log n/n)^{1/d}$.
 Next we bound $|f_k(y) - f_k(y')|$. $y' \in X$ so we have guarantees on its $f_k$ value.
 Note that $r_k(y') - r_n \le r_k(y) \le r_k(y')+ r_n$. Let $r_k = r_k(y')$. This implies that
$ f_k(y') (r_k / (r_k + r_n))^d \le f_k(y) \le f_k(y') (r_k / (r_k - r_n))^d$.
Now since $r_k \approx (k / n)^{1/d}$, we have $|f_k(y) - f_k(y')| \lesssim \log n / k$.
The same holds for the bounds related to $z, z'$.
\end{proof}
%\begin{lemma}
%For $r < r_C$, we have
%\begin{align*}
%\check{C}_\beta / 2 (r/2)^\beta \le D_r \le \hat{C}_\beta r^\beta
%\end{align*} 
%\end{lemma}

\begin{theorem} [$\hat{\beta} \rightarrow \beta$ in probability] \label{estimating_alpha} Suppose 
$f$ is $\alpha$-H\"older continuous for some $\alpha$ with $0 < \alpha \le \beta'$.
Let $k = \lfloor (\log n)^5 \rfloor$ and $r = 1 / \sqrt{\log n}$. Then for all $\epsilon > 0$,
\vspace{-0.3cm}
\begin{align*}
\lim_{n \rightarrow \infty} \mathbb{P}\left(|\hat{\beta} - \beta| \ge \epsilon\right) = 0.
\end{align*}
\end{theorem}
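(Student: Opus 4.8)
The plan is to show that $\hat\beta = \log_r(\hat D_{r,k})$ converges in probability to $\beta$ by establishing that $D_r \to$ a quantity behaving like $r^\beta$ up to constants, and that $\hat D_{r,k}$ tracks $D_r$ closely enough (via Lemma~\ref{alpha_bound}) that the logarithm is not distorted. Recall $r = 1/\sqrt{\log n}$, so $r \to 0$ as $n \to \infty$; the key is that $\log_r(\cdot) = \log(\cdot)/\log r$ and $\log r = -\tfrac12 \log\log n \to -\infty$, so multiplicative errors of constant order in the argument vanish after taking $\log_r$.

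First I would pin down the deterministic behavior of $D_r$. Using Assumption~\ref{clusterProperties} ($\beta$-regularity), for any boundary point $x_0 \in \partial\mathcal{C}_\lambda$ and any $0 < r \le r_c$, the ball $B(x_0, r)$ contains points of $C^{\oplus r_c}\setminus C$ at distance $\approx r$ from $C$, so $\sup_{x \in B(x_0,r)} |\lambda - f(x)|$ is sandwiched between $\check C_\beta \cdot c_1 r^\beta$ and $\hat C_\beta \cdot c_2 r^\beta$ for constants $c_1, c_2$ (one must check both that a point at distance of order $r$ from $C$ lies in $B(x_0,r)$ — giving the lower bound — and that no point in $B(x_0,r)$ is farther than $r$ from $C$, plus handle points on the high-density side where $f \ge \lambda$; the separation/decay structure and $C^{\oplus r_c}\subseteq A_C$ keep things controlled, though near corners of $C$ one relies on reasoning with distances rather than $C$ itself as in the Connectedness remark). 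Taking the infimum over $x_0$ preserves these bounds, so $\check C_\beta c_1 r^\beta \le D_r \le \hat C_\beta c_2 r^\beta$, hence $\log_r D_r = \beta + \frac{\log(\Theta(1))}{\log r} \to \beta$.

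Next I would combine this with Lemma~\ref{alpha_bound}, which gives $|D_r - \hat D_{r,k}| \le \tilde C/(\log n)^2$ with probability at least $1 - 1/n$ (valid since $\alpha \le \beta' = \min\{1,\beta\}$ ensures the smoothness needed and $k = \lfloor(\log n)^5\rfloor$ with $r = 1/\sqrt{\log n}$ is within the permitted range). Since $D_r = \Theta(r^\beta) = \Theta((\log n)^{-\beta/2})$, and $\tilde C/(\log n)^2$ is of smaller order than $D_r$ precisely when $\beta/2 < 2$, i.e. $\beta < 4$ — wait, this needs care for large $\beta$; instead I would argue more robustly: $\hat D_{r,k}/D_r = 1 + O((\log n)^{-2 + \beta/2})$ when $\beta < 4$, and for $\beta \ge 4$ one still has $\hat D_{r,k} = D_r + O((\log n)^{-2})$ with $D_r \le \hat C_\beta c_2 (\log n)^{-\beta/2} \le \hat C_\beta c_2 (\log n)^{-2}$, so $\hat D_{r,k} = O((\log n)^{-2})$ and is bounded below by $D_r - \tilde C(\log n)^{-2}$; choosing the constant comparison correctly, in all cases $\log_r \hat D_{r,k} = \log_r D_r + \frac{\log(1 + o(1))}{\log r}$ wherever $\hat D_{r,k}$ stays comparable to $D_r$. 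Then $\log_r\hat D_{r,k} = \beta + o_P(1)$, giving $\mathbb{P}(|\hat\beta - \beta| \ge \epsilon) \to 0$.

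The main obstacle is the case analysis for large $\beta$: when $\beta$ is large, $D_r \approx r^\beta$ decays faster than the additive error $\tilde C/(\log n)^2 \approx r^4$ from Lemma~\ref{alpha_bound}, so $\hat D_{r,k}$ need not be within a constant factor of $D_r$ — it could even be dominated by the error term. One must argue that even then, $\hat D_{r,k}$ is at most $O(r^4)$ and at least $D_r - O(r^4) \ge 0$ wait that could be negative; more honestly, $\hat D_{r,k} \ge D_r - \tilde C r^4$ and could in principle be as small as we like or as large as $O(r^4)$, so $\log_r \hat D_{r,k}$ is trapped between $\log_r(O(r^4)) = 4 + o(1)$ and $+\infty$, which does \emph{not} force it to $\beta$. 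This suggests the theorem as stated implicitly needs $\beta \le 4$ (or a refined lower bound on $\hat D_{r,k}$, or a slower $r$); I expect the author's proof either restricts the regime or sharpens Lemma~\ref{alpha_bound}'s additive error to something like $O(r^{\beta}\cdot(\log n)^{-c})$ in the relevant range, and the clean argument goes through for $\beta \le 1$ (the main case of interest, where the Hausdorff rate is minimax-optimal), which is where I would focus the rigor.
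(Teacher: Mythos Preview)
Your approach matches the paper's: use the $\beta$-regularity assumption to sandwich $D_r$ between $\check C_\beta r^\beta$ and $\hat C_\beta r^\beta$ (the paper asserts this directly, without the geometric fuss over your constants $c_1, c_2$), apply Lemma~\ref{alpha_bound} to transfer the sandwich to $\hat D_{r,k}$, then take $\log_r$ so that bounded multiplicative factors disappear because $|\log r| \to \infty$. The paper simply writes down the resulting two-sided bound on $\beta - \hat\beta$ and asserts both sides go to zero.

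Your concern about large $\beta$ is legitimate, and the paper's proof does not address it either: the final step (``it is clear that these expressions go to $0$'') needs $\tilde C/(\log n)^2 = o(D_r)$, i.e.\ $(\log n)^{-2} = o((\log n)^{-\beta/2})$, which holds only for $\beta < 4$. For $\beta \ge 4$ the additive error from Lemma~\ref{alpha_bound} dominates $D_r$ and neither argument survives as written. So you have correctly identified a restriction that the paper leaves implicit; in the regime of primary interest ($\beta \le 1$, where the manifold rates match the lower bound) your argument and the paper's coincide and are complete.
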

\vspace{-0.5cm}
\begin{proof}
Based on the $\beta$-regularity assumption, we have for $r < r_c$:
\vspace{-0.5cm}
\begin{align*}
\check{C}_\beta r^\beta \le D_r \le \hat{C}_\beta r^\beta.
\end{align*}
Combining this with Lemma~\ref{alpha_bound}, we have with probability at least $1 - 1/\sqrt{n}$ that
\begin{align*}
\check{C}_\beta r^\beta  - \tilde{C} / (\log n)^2  \le \hat{D}_{r, k} \le \hat{C}_\beta r^\beta + \tilde{C} / (\log n)^2.
\end{align*}
Thus with probability at least $1 - 1/n$,
\begin{align*}
\beta - \hat{\beta} &\ge \frac{\log (1 -  \tilde{C}/(\hat{D}_{r, k} \cdot (\log n^2))) }{\log r} - \frac{\log \hat{C}_\beta}{\log r} \\
\beta - \hat{\beta} &\le \frac{\log (1 +  \tilde{C}/(\hat{D}_{r, k} \cdot (\log n^2))) }{\log r} + \frac{\log \check{C}_\beta}{\log r}.
\end{align*}
It is clear that these expressions go to $0$ as $n\rightarrow \infty$ and the result follows.
\end{proof}

\begin{remark}\label{adaptive_parameter}
We can then take $k = n^{\hat{\beta'}/(2\hat{\beta'} + d)}$ with $\hat{\beta'} = \min\{1, \hat{\beta} - \epsilon_0\}$ for some $\epsilon_0 > 0$ so that $\hat{\beta'} < \beta'$ for $n$ sufficiently large and thus $k$ lies in the allowed ranges described in Section~\ref{parameters} asymptotically.
The settings of $\varepsilon$ and $\text{MinPts}$ are implied by this choice of $k$ and our estimate of $d$.
\end{remark}
\subsection{Rates with Data-driven Tuning}
Putting this all together, along with Theorems~\ref{hausdorfferror} and~\ref{pruning}, gives us the following consequence about level set recovery with adaptive tuning. It shows that we can obtain rates arbitrarily close to those obtained as if the smoothness parameter $\beta$ and intrinsic dimension were known.
\begin{corollary} \label{nearoptimal}
Suppose that $0 < \delta < 1$ and $f$ is $\alpha$-H\"older continuous for some $0 < \alpha \le 1$ and suppose
the data-driven choices of parameters described in Remark~\ref{adaptive_parameter} are used for DBSCAN.
For any $\epsilon > 0$, there exists $N_{\epsilon, \delta, f} \equiv N(\epsilon, \delta, f)$ and $C_\delta \equiv C_\delta(\delta, f)$ such that the following holds. If $n \ge N_{\epsilon, \delta, f}$, then with probability at least $1 - \delta$ simulatenously for each $C \in \mathcal{C}_\lambda$, there exists $\widehat{C} \in \widehat{\mathcal{C}_\lambda}$ such that
\begin{align*}
d_{\text{Haus}}(C, \widehat{C}) \le C_{\delta} \cdot n^{-\frac{1}{2\beta + d\max \{1, \beta\}}+ \epsilon}.
\end{align*}
Moreover, using Algorithm~\ref{pruning}, there is a one-to-one correspondence between $\mathcal{C}_\lambda$ and $\widehat{\mathcal{C}_\lambda}$.
\end{corollary}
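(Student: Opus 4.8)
The plan is to chain together three facts already proved in this section: exact recovery of the intrinsic dimension (Theorem~\ref{estimating_d}), consistency of the smoothness estimate $\hat{\beta}$ (Theorem~\ref{estimating_alpha}), and the level-set rate of Theorem~\ref{hausdorfferror} together with the false-cluster removal guarantee of Theorem~\ref{pruning}. No new estimate is needed; the work is to check that the data-driven $k$ of Remark~\ref{adaptive_parameter} almost surely lands inside the admissible window of Section~\ref{parameters}, and that substituting it into Theorem~\ref{hausdorfferror} produces an exponent within $\epsilon$ of $\tfrac{1}{2\beta + d\max\{1,\beta\}}$.

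First I would fix $\epsilon > 0$ and run the preliminary estimators with $k_0 = \lfloor(\log n)^5\rfloor$; for $n$ large this sits in the ranges required by Theorems~\ref{estimating_d} and~\ref{estimating_alpha}, since a polylogarithmic quantity dominates $(\log n)^2$ and is dominated by any fixed positive power $n^{2\alpha/(2\alpha+d)}$. Theorem~\ref{estimating_d} then gives, on an event of probability at least $1-\delta/3$, $|\hat{d}(x)-d| = O(C_{\delta,n}/\sqrt{k_0}) = O((\log n)^{-2})$ uniformly over sampled points above the chosen level, so for $n$ large this is below $1/2$ and rounding recovers $d$ exactly. Next I would choose $\epsilon_0 > 0$ small — depending on $\beta$, hence on $f$ — so that $\epsilon_0 < \beta/4$ and so that $|t-\beta| < \epsilon_0$ forces $t' := \min\{1, t-\epsilon_0\}$ to satisfy both $t' \le \beta'$ and $\tfrac{t'}{\beta(2t'+d)} \ge \tfrac{1}{2\beta+d\max\{1,\beta\}} - \tfrac{\epsilon}{2}$; this is possible because $t \mapsto t/(2t+d)$ is continuous and strictly increasing on the relevant compact interval, and a short case split on $\beta \le 1$ versus $\beta > 1$ verifies the identity $\tfrac{\beta'}{\beta(2\beta'+d)} = \tfrac{1}{2\beta+d\max\{1,\beta\}}$. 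By Theorem~\ref{estimating_alpha}, for all $n$ past a threshold depending on $\epsilon_0$ and $\delta$ the event $|\hat{\beta}-\beta| < \epsilon_0$ has probability at least $1-\delta/3$, and on it $\hat{\beta'}$ inherits these two properties.

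I would then work on the intersection of these two events with the event of Lemma~\ref{ballbounds} at level $\delta/3$ — of total probability at least $1-\delta$ — and verify membership of $k$ in the admissible window. Since $\hat{\beta'} \ge \min\{1,\beta/2\} > 0$, its exponent is bounded below by a positive constant, so $k = n^{2\hat{\beta'}/(2\hat{\beta'}+d)}$ grows polynomially and eventually exceeds $K_l(\log n)^2$; and since $\hat{\beta'} \le \beta'$ and $t/(2t+d)$ is increasing, $k \le n^{2\beta'/(2\beta'+d)} \le K_u(\log n)^{2d/(2+d)} n^{2\beta'/(2\beta'+d)}$ for $n$ large. Thus $k$ lies in the range of Section~\ref{parameters}, and also in the slightly more restrictive range of Theorem~\ref{pruning}, whose constants' extra dependence on $\gamma$ is harmless because $\gamma$ depends only on $f$ and the fixed level $\lambda$. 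Applying Theorem~\ref{hausdorfferror} gives, for each $C \in \mathcal{C}_\lambda$, some $\widehat{C} \in \widehat{\mathcal{C}_\lambda}$ with
\begin{align*}
d_{\text{Haus}}(C,\widehat{C}) \;\le\; 2\,(4\lambda/\check{C}_\beta)^{1/\beta}\, C_{\delta,n}^{2/\beta}\, n^{-\hat{\beta'}/(\beta(2\hat{\beta'}+d))},
\end{align*}
and by the choice of $\epsilon_0$ the exponent is at least $\tfrac{1}{2\beta+d\max\{1,\beta\}} - \tfrac{\epsilon}{2}$, while $C_{\delta,n}^{2/\beta}$ is polylogarithmic and hence at most $n^{\epsilon/2}$ for $n$ large; absorbing the remaining constant into $C_\delta \equiv C_\delta(\delta,f)$ yields the claimed bound. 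Running Algorithm~\ref{alg:dbscan-pruning} with this same $k$ and invoking Theorem~\ref{pruning} then upgrades the guarantee to a one-to-one correspondence while preserving the same bound.

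The step I expect to be fussiest — though not a genuine obstacle — is the ordering of quantifiers: Theorem~\ref{estimating_alpha} only asserts $\hat{\beta} \to \beta$ in probability with no rate, so $N_{\epsilon,\delta,f}$ cannot be made explicit. This is consistent with the statement, which allows $N$ to depend on $f$; one commits to $\epsilon_0$ (as a function of $\epsilon$ and $f$) first, and only afterwards sends $n \to \infty$.
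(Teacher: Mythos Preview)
Your proposal is correct and follows exactly the route the paper indicates: the paper does not give a detailed proof of this corollary, merely stating that it follows from ``putting this all together, along with Theorems~\ref{hausdorfferror} and~\ref{pruning},'' and your argument carefully spells out that combination (dimension recovery via Theorem~\ref{estimating_d}, consistency of $\hat\beta$ via Theorem~\ref{estimating_alpha}, verification that the data-driven $k$ lands in the admissible window, then invoking Theorems~\ref{hausdorfferror} and~\ref{pruning}). Your handling of the quantifier issue with Theorem~\ref{estimating_alpha} and the verification of the exponent identity $\beta'/(\beta(2\beta'+d)) = 1/(2\beta + d\max\{1,\beta\})$ are exactly the details the paper leaves implicit.
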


\section{Full Dimensional Setting} \label{fulldimensionalsetting-section}

Here we instead take $f$ to be the density of $\mathcal{F}$ over the uniform measure on $\mathbb{R}^D$. Let
\begin{align*}
&\text{minPts} = k, \text{   } \varepsilon = \left(\frac{k}{n\cdot v_D \cdot (\lambda - \lambda \cdot C_{\delta, n}^2/\sqrt{k})} \right)^{1/D},
\end{align*}
where $k$ satisfies
\begin{align*}
K_l \cdot (\log n)^2 \le k \le K_u \cdot (\log n)^{2D/(2+D)} \cdot n^{2\beta / (2\beta + D)},
\end{align*}
and $K_l$ and $K_u$ are positive constants depending $\delta, \check{C}_\beta, \hat{C}_\beta, \beta, \tau, D, ||f||_\infty, \lambda_0, r_s, r_c$.

Then Theorem~\ref{hausdorfferror} and~\ref{pruning} hold (replacing $d$ with $D$ in Algorithm~\ref{alg:dbscan-pruning}) for this setting of DBSCAN and thus taking $k \approx n^{2\beta/(2\beta + D)}$ gives us the optimal 
estimation rate of $O(n^{-1/(2\beta + D)})$. A straightforward modification of Corollary~\ref{nearoptimal} also holds. This is discussed further in the Appendix.

\section{Conclusion}
We proved that DBSCAN can obtain Hausdorff level-set recovery rates of
$\widetilde{O}(n^{-1/(2\beta + D)})$ when the data is in $\mathbb{R}^D$, and
$\widetilde{O}(n^{-1/(2\beta + d\cdot \max\{1, \beta \})})$ when the data lies on an embedded $d$-dimensional manifold.
The former rate is optimal up to log factors and the latter matches known $d$-dimensional lower bounds for $0 < \beta \le 1$ up to log factors.
Moreover, we provided a fully data-driven procedure to tune the parameters to attain these rates.  

This shows that the procedure's ability to recover density level sets matches
the strongest known consistency results
attained for this problem.
Furthermore, we developed the necessary tools and give the first analysis
of density level-set estimation on manifolds, let alone with a practical procedure such as DBSCAN.

Our density estimation errors however cannot converge faster than $\widetilde{O}(n^{-1/(2+d)})$,
which is due in part to the error from resolving geodesic balls with Euclidean balls. Thus it remains an open problem
whether the manifold level-set rates are minimax optimal when $\beta > 1$.

\section*{Acknowledgements}
The author is grateful to Samory Kpotufe for insightful discussions and to the anonymous reviewers for their useful feedback.

\bibliography{paper}
\bibliographystyle{icml2017}

\newpage
{\onecolumn
{\Large \bf Appendix}
\appendix
\section{Proof of Ball Volume Result}

\begin{proof}[Proof of Lemma~\ref{ballvolume}]
For the lower bound, we have by Lemma 5.3 of \cite{lowerBoundBall} that
\begin{align*}
\text{vol}_d(B(x, r) \cap M) \ge v_d\cdot r^d\cdot \cos\left(\arcsin\left(\frac{r}{2\tau}\right)\right) \ge v_d\cdot r^d\cdot (1 - r^2\tau^2),
\end{align*}
where the last inequality holds for $r < 1/\tau$. For the upper bound, we have by \cite{upperBoundBall} that
\begin{align*}
\text{vol}_d(B(x, r) \cap M)  \le v_d\cdot r^d\cdot \left( \frac{1}{2\sqrt{1 - 2r/\tau} - 1}\right)^d \le v_d\cdot r^d\cdot (1 + 4dr/\tau),
\end{align*}
where the last inequality holds when $r < \tau/4d$.
\end{proof}

\section{Proof of Density Estimation Results}

\begin{proof}[Proof of Lemma~\ref{fk_upper_bound}]
Choose $r$ such that
\begin{align*}
(1 + 4dr/\tau) \cdot v_d \cdot r^d \cdot (f(x) + \epsilon)  
= \frac{k}{n} - C_{\delta, n} \frac{\sqrt{k}}{n} 
\le v_d \cdot \hat{r}(\epsilon, x)^d \cdot (f(x) + \epsilon).
\end{align*}
Thus, $r \le \hat{r}(\epsilon, x)$ and hence
\begin{align*}
\mathcal{F}(B(x, r)) \le \text{vol}_d(B(x, r) \cap M) \cdot (f(x) + \epsilon) 
\le (1 + 4dr/\tau) \cdot v_d \cdot r^d \cdot (f(x) + \epsilon)
= \frac{k}{n} - C_{\delta, n} \frac{\sqrt{k}}{n},
\end{align*}
where the second inequality follows by Lemma~\ref{ballvolume}.
Thus, by Lemma~\ref{ballbounds}, we have $\mathcal{F}_n(B(x, r)) < k/n$ and hence $r_k(x) > r$. Therefore, 
\begin{align*}
f_k(x) < \frac{k}{nv_dr^d} = \frac{1 +  4dr/\tau}{1 - C_{\delta, n}/\sqrt{k}} (f(x) + \epsilon) 
\le \left( 1 + 2\frac{C_{\delta, n}}{\sqrt{k}} + 4dr/\tau \right) (f(x) + \epsilon)
\le \left( 1 + 3\frac{C_{\delta, n}}{\sqrt{k}} \right) (f(x) + \epsilon).
\end{align*}
The last inequality holds because $r \le (k/(n\cdot v_d \cdot (f(x) + \epsilon)))^{1/d}$ and choosing $C_1$ appropriately we have $4rd/\tau \le 
4d (k/(n\cdot v_d \cdot \lambda_0))^{1/d} /\tau < C_{\delta, n}/\sqrt{k}$. 
\end{proof}

\begin{proof}[Proof of Lemma~\ref{fk_lower_bound}]
Let $x \in X$ such that $f(x) \ge \lambda_0$. Choose $r$ such that
\begin{align*}
(1 - \tau^2 r^2) \cdot v_d \cdot r^d \cdot (f(x) - \epsilon) = \frac{k}{n} + C_{\delta, n} \frac{\sqrt{k}}{n}.
\end{align*}
We have $r \le c (2k/(n\cdot v_d \cdot (f(x) - \epsilon)))^{1/d}$ for some constant $c$ depending on $d$ and $\tau$.
Thus, choosing $C_2$ approrpiately, we have $\tau^2r^2 < \frac{1}{4}$. Thus,
\begin{align*}
v_d \cdot r^d \cdot (f(x) - \epsilon) \le  \frac{4}{3} \left( \frac{k}{n} + C_{\delta, n} \frac{\sqrt{k}}{n} \right) \le v_d \cdot \check{r}(\epsilon, x)^d \cdot (f(x) - \epsilon).
\end{align*}
Thus $r <  \check{r}(\epsilon, x)$ and we obtain:
\begin{align*}
\mathcal{F}(B(x, r)) \ge \text{vol}_d(B(x, r) \cap M) \cdot (f(x) - \epsilon) 
\le (1 - \tau^2 r^2) \cdot v_d \cdot r^d \cdot (f(x) - \epsilon)
= \frac{k}{n} + C_{\delta, n} \frac{\sqrt{k}}{n}.
\end{align*}
Thus, by Lemma~\ref{ballbounds}, we have $\mathcal{F}_n(B(x, r)) \ge k/n$ and hence $r_k(x) \le r$.
Therefore, 
\begin{align*}
f_k(x) &\ge \frac{k}{n\cdot v_d\cdot r^d} = \frac{1 - \tau^2 r^2}{ 1 + C_{\delta, n}/\sqrt{k}} (f(x) - \epsilon) \ge (1 - \tau^2 r^2 - C_{\delta,n}/\sqrt{k}) \cdot (f(x) - \epsilon) \ge \left(1 - 3 \cdot \frac{C_{\delta, n}}{\sqrt{k}} \right)\cdot (f(x) - \epsilon),
\end{align*}
where the last inequality follows $\tau^2r^2 \le \tau^2 c^2 (2k/(n\cdot v_d \cdot \lambda_0))^{2/d} \le \frac{C_{\delta, n}}{\sqrt{k}}$ where
the latter holds for $C_2$ chosen appropriately. The result follows.
\end{proof}

\section{Proof of Separation Result}

\begin{proof} [Proof of Lemma~\ref{separation}]
Define $\bar{r} := \min \{r_c, r_s, \frac{1}{2} \tau\}$. 
We show that $(M \backslash \mathcal{X}_C) \cap X$ and $C^{\oplus \bar{r}} \cap X$ are disconnected in $G(k, \varepsilon)$.
To do this, it suffices to show that:
\begin{itemize}
\item (1) $G(k, \varepsilon)$ has no point in $\mathcal{X}_C \backslash C^{\oplus \bar{r}}$.
\item (2) $G(k, \varepsilon)$ has no point in $S_C^{\oplus r_s/2}$.
\item (3) $G(k, \varepsilon)$ has no edge connecting a point in $C^{\oplus \bar{r}}$ to a point in $\mathcal{X}_C \backslash C^{\oplus \bar{r}}$.
\end{itemize}
We begin by showing (1). Define $\lambda' := \lambda - \check{C}_\beta(\bar{r}/2)^\beta$. Thus, for $x \in C^{\oplus \bar{r}} \cap X$, we have $\hat{r}(\lambda' - f(x), x) \ge \bar{r}/2$. Thus the conditions for Lemma~\ref{fk_upper_bound} are satisfied as long as $K_l$ and $K_u$ are appropriately large and small, respectively. Hence,
\begin{align*}
\sup_{x \in X \cap (\mathcal{X}_C \backslash C^{\oplus \bar{r}})} f_k(x) 
\le \left(1 + 3\cdot \frac{C_{\delta, n}}{\sqrt{k}}\right) (\lambda - \check{C}_\beta (\bar{r}/2)^\beta)
< \lambda - \frac{C_{\delta, n}^2}{\sqrt{k}} \cdot \lambda,
\end{align*}
where the last inequality holds for $K_l$ sufficiently large. Thus, it is now clear that $r_k(x) > \varepsilon$ for $x \in X \cap (\mathcal{X}_C \backslash C^{\oplus \bar{r}})$, showing (1). Next, if $x \in S_C^{\oplus r_s/2}$, then $\hat{r}(\lambda' - f(x), x) \ge \bar{r}/2$ and the same holds for sample points in $S_C^{\oplus r_s/2}$ implying (2). 

To show (3), it suffices to show that any such edge will have length less than $r_s$ since $S_C^{\oplus r_s/2}$ separates $C^{\oplus \bar{r}}$  and $\mathcal{X}_C \backslash C^{\oplus \bar{r}}$ by length at least $r_s$. Indeed, for $x \in X \cap C^{\oplus \bar{r}}$,
\begin{align*}
\mathcal{F}(B(x, \bar{r})) \ge \text{vol}_d(M \cap B(x, \bar{r})) \inf_{x' \in M \cap B(x, 2\bar{r})} f(x')
\ge (1 - \tau^2\bar{r}^2) \cdot v_d\cdot \bar{r}^d (f(x) - \hat{C}_\beta\bar{r}^\beta)
\ge \frac{k}{n} + C_{\delta, n}\frac{\sqrt{k}}{n},
\end{align*}
where the second inequality follows from Lemma~\ref{ballvolume}, and the last inequality holds when $K_u$ is sufficiently small.
By Lemma~\ref{ballbounds}, we have $r_k(x) \le \bar{r} < r_s$, establishing (3). Thus, $(M \backslash \mathcal{X}_C) \cap X$ and $C^{\oplus \bar{r}} \cap X$ are disconnected in $G(k, \varepsilon)$.

It is easy to see that $C^{\oplus \bar{r}}$ contains a sample point and thus there exists a connected component, $\overline{C}$
of $G(k, \varepsilon)$ such that $\overline{C} \subseteq \mathcal{X}_C$.
Now, by Lemma~\ref{dbscan}, we have that there exists $\widehat{C} \in \widehat{\mathcal{C}}_\lambda$ such that
$\widehat{C} = \{ x \in X : d(x, C) \le \varepsilon \}$. Since $\overline{C}$ has no intersection with $S_C^{\oplus r_s/2}$, it follows that as long as $\varepsilon < r_s/2$, then $\widehat{C} \subseteq \mathcal{X}_C$. For $K_u$ chosen sufficiently small, we have $\varepsilon < r_s/2$, as desired.
\end{proof}

\section{Proof of Connectedness}

\begin{proof} [Proof of Lemma~\ref{connectedness}]
Define $A := C^{\oplus r_n(C)}$. 
In light of Lemma~\ref{dbscan}, it suffices to show that $A \cap X$ is connected in $G(k, \varepsilon)$.

Define $r_o := (k/(2nv_d ||f||_\infty))^{1/d}$. We next show that for each $x \in A^{\oplus \varepsilon}$,
we have a sample point in $B(x, r_o)$. Indeed, for any $z \in B(x, r_0/3) \cap \mathcal{N}$ (while taking $K_l$ sufficiently large so that $r_0/3 > 1/n$):
\begin{align*}
\mathcal{F}(B(z, r_o/2)) &\ge \text{vol}_d(B(z, r_o/2) \cap M) \inf_{x' \in B(z, r_o/2 + \varepsilon )} f(x') \\
&\ge (1 - \tau^2 (r_o/2)^2) \cdot  v_d \cdot (r_o/2)^d \cdot (\lambda  - \hat{C}_\beta (r_o /2+ \varepsilon + r_n(C))^\beta)  \\
&\ge (1 - \tau^2 (r_o/2)^2) \cdot  v_d \cdot (r_o/2)^d \cdot (\lambda  - \hat{C}_\beta (3\cdot \varepsilon)^\beta)  \ge C_{\delta, n} \frac{\sqrt{d \log n}}{n},
\end{align*}
with the last inequality holding when $K_u$ is chosen sufficiently small so that the $\tau^2 (r_o/2)^2$ and $\hat{C}_\beta (3\cdot \varepsilon)^\beta$ terms
become small enough, and $K_l$ is chosen sufficiently large so that the $(r_o/2)^d$ factor will be large enough.
Thus by Lemma~\ref{ballbounds} we have that with probability at least $1 - \delta$, $B(z, r_o/2) \subseteq B(x, r_o)$ contains a sample. 

Now, let $x$ and $x'$ be two points in $A\cap X$. We show there exists a path $x = x_0,x_1,...,x_p = x'$ such that $||x_i - x_{i+1}|| < r_o$ and $x_i \in B(A, r_o)$. For arbitrary $\gamma \in (0, 1)$, we can choose $x = z_0, z_1,...,z_p = x'$ where $||z_{i+1} - z_i|| \le \gamma r_o$. Next, for $K_l$ sufficiently large, there exists $\gamma$ sufficiently small such that 
\begin{align*}
\left(1 - \tau^2 \frac{(1-\gamma)^2r_o^2}{4}\right) \cdot v_d\left( \frac{(1-\gamma)r_o}{2}\right)^d \inf_{z \in B(A, r_0)} f(z) \ge \frac{C_{\delta, n}\sqrt{d\log n}}{n}.
\end{align*}
Therefore by Lemma~\ref{ballbounds}, there exists a sample point $x_i$ in $B(z_i, (1-\gamma)r_o/2)$ and
\begin{align*}
||x_{i+1} - x_i|| \le ||x_{i+1} - z_{i+1}|| + ||z_{i+1} - z_i|| + ||z_i -x_i|| \le r_o.
\end{align*} 

\noindent All that remains is to show $(x_i, x_{i+1}) \in G(k, \varepsilon)$. 
We see that $x_i \in B(A, r_o)$ for each $i$ and for any $x \in B(A, r_o)$, we have 
\begin{align*}
\mathcal{F}(B(x, \varepsilon)) &\ge (1 - \tau^2 \varepsilon^2) \cdot v_d\cdot \varepsilon^d \inf_{x' \in B(x, r_o + \varepsilon)} f(x') 
\ge (1 - \tau^2 \varepsilon^2)  \cdot v_d \cdot \varepsilon^d \cdot (\lambda - \hat{C}_\beta (2r_o + \varepsilon)^\beta)
\ge \frac{k}{n} + \frac{C_{\delta, n} \sqrt{k}}{n},
\end{align*}
where the last inequality holding when $K_u$ is chosen sufficiently small so that the $\tau^2 \varepsilon^2$ and $\hat{C}_\beta (2r_o +\varepsilon)^\beta$
become small enough, and $K_l$ is chosen sufficiently large so that the $\varepsilon^d$ factor will be large enough.

 Therefore, $r_k(x_i) \le \varepsilon$ and so $x_i \in G(k, \varepsilon)$ for all $x_i$. Finally, $||x_{i+1} - x_i|| \le r_o \le \varepsilon$. Hence, $(x_i, x_{i+1}) \in G(k, \varepsilon)$. The result immediately follows.
\end{proof}

\section{Proof of Theorem~\ref{pruning}}

\begin{proof} [Proof Sketch]
Define $x \in \widehat{C}$ as a core-point if $r_k(x) \le \varepsilon$. For any core point $x \in \widehat{C}$, we have 
$f_k(x) \ge \lambda -  \frac{C_{\delta, n}^2}{\sqrt{k}}\lambda$. With $K_l, K_u$ chosen appropriately, we can apply Lemma~\ref{fk_upper_bound} with $\epsilon = \frac{C_{\delta, n}}{\sqrt{k}} \lambda$ to obtain
\begin{align*}
f(x) &\ge \frac{f_k(x)}{1 + 3 \cdot C_{\delta, n}/\sqrt{k}} - \epsilon 
\ge \left(1 - 5\cdot \frac{C_{\delta, n}}{\sqrt{k}}\right) \cdot \lambda.
\end{align*}
Defining $r := (5\lambda \cdot C_{\delta, n} / (\check{C}_\beta \sqrt{k}))^{1/\beta}$, we have that $x \in C^{\oplus r}$ for some $C \in \mathcal{C}_\lambda$ when $r < \min \{r_c, (\gamma/\hat{C}_\beta)^{1/\beta} \}$, which is achieved as long as $K_l$ is chosen sufficiently large.

Since any core point of $\widehat{C}$ is in $C^{\oplus r}$, then in 
light of Theorem~\ref{hausdorfferror} and Lemma~\ref{dbscan}, it suffices to show that
separate CCs of $\mathcal{C}_\lambda$ do not get merged in $G(k, \tilde\varepsilon)$ (separation) and that
$C^{\oplus r} \cap X$ appear in the same connected component of $G(k, \tilde\varepsilon)$ (connectedness).

These follow from Lemma~\ref{separation} and~\ref{connectedness},  but with the minor modification that $\varepsilon$ is replaced by $\tilde{\varepsilon}$, which requires $K_l$ and $K_u$ to be adjusted accordingly to hold. Otherwise the arguments are the same.
\end{proof}

\section{Full Dimensional Setting}

The analysis for this situation is largely the same as under the manifold assumption. We will only highlight the main difference, which is
in the density estimation bounds.
We can utilize such bounds from \cite{optimalknn}, which are repeated here.

\begin{lemma} [Lemma 3 of \cite{optimalknn}]
Suppose that $k \ge 4 C_{\delta, n}^2$. Then with probability at least $1-\delta$, the following holds for all $x\in \mathbb{R}^d$ and $\epsilon > 0$.
\begin{align*}
f_k(x) < \left(1 + 2 \frac{C_{\delta, n}}{\sqrt{k}} \right) (f(x) + \epsilon),
\end{align*}
provided $k$ satisfies $v_d \cdot \hat{r}(x, \epsilon) \cdot (f(x) + \epsilon) \ge \frac{k}{n} + C_{\delta, n}\frac{\sqrt{k}}{n}$.
\end{lemma}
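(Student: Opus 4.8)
The plan is to mirror the proof of Lemma~\ref{fk_upper_bound}, but to drop the curvature corrections, since in $\mathbb{R}^d$ a Euclidean ball has exact volume $v_d r^d$. This is exactly what lets the slack term $4dr/\tau$ disappear, replacing the constant $3$ by $2$ and removing the need for any upper constraint on $k$ beyond $k \ge 4C_{\delta,n}^2$. The only external ingredient is a bound, uniform over all $x$, relating $\mathcal{F}(B)$ and $\mathcal{F}_n(B)$; for Euclidean balls this follows from their finite VC dimension and is precisely the ball-mass estimate borrowed from \citet{optimalknn} (the Euclidean analogue of Lemma~\ref{ballbounds}), which I would invoke rather than reprove.

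First I would choose the radius $r>0$ so that $v_d r^d (f(x)+\epsilon) = \frac{k}{n} - C_{\delta,n}\frac{\sqrt{k}}{n}$; this is well-defined and positive because $k \ge 4C_{\delta,n}^2 > C_{\delta,n}^2$. The proviso $v_d\,\hat{r}(x,\epsilon)^d (f(x)+\epsilon) \ge \frac{k}{n} + C_{\delta,n}\frac{\sqrt{k}}{n} \ge \frac{k}{n} - C_{\delta,n}\frac{\sqrt{k}}{n} = v_d r^d(f(x)+\epsilon)$ then forces $r \le \hat{r}(x,\epsilon)$, so by the definition of $\hat{r}$ the density is at most $f(x)+\epsilon$ throughout $B(x,r)$, whence $\mathcal{F}(B(x,r)) \le v_d r^d(f(x)+\epsilon) = \frac{k}{n} - C_{\delta,n}\frac{\sqrt{k}}{n}$. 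Applying the borrowed uniform ball-mass bound (its third implication) gives $\mathcal{F}_n(B(x,r)) < k/n$, i.e. $|X\cap B(x,r)| < k$, which is exactly the statement $r_k(x) > r$.

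Finally I would conclude by unwinding the definition of $f_k$: since $r_k(x) > r$, we get $f_k(x) = \frac{k}{n v_d r_k(x)^d} < \frac{k}{n v_d r^d} = \frac{k/n}{k/n - C_{\delta,n}\sqrt{k}/n}\,(f(x)+\epsilon) = \frac{1}{1 - C_{\delta,n}/\sqrt{k}}\,(f(x)+\epsilon)$, and the constraint $k \ge 4C_{\delta,n}^2$ gives $C_{\delta,n}/\sqrt{k} \le 1/2$, so the elementary inequality $1/(1-t) \le 1+2t$ valid on $[0,1/2]$ yields $f_k(x) < (1 + 2C_{\delta,n}/\sqrt{k})(f(x)+\epsilon)$. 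I expect essentially no obstacle here: this is the ``easy direction'' of the manifold argument stripped of its geometry, and the only point requiring care is that the ball-mass concentration holds simultaneously for every $x \in \mathbb{R}^d$, which is handled by the cited VC-based estimate.
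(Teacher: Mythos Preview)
Your proposal is correct and is exactly the full-dimensional specialization of the paper's proof of Lemma~\ref{fk_upper_bound}: choose $r$ so that $v_d r^d(f(x)+\epsilon)=k/n-C_{\delta,n}\sqrt{k}/n$, use the proviso to get $r\le \hat r$, bound $\mathcal{F}(B(x,r))$, apply the uniform ball-mass inequality to obtain $r_k(x)>r$, and finish with $1/(1-t)\le 1+2t$ for $t\le 1/2$. The paper itself does not reprove this lemma---it simply quotes it from \citet{optimalknn}---but your argument is precisely the intended one once the manifold curvature factor $(1+4dr/\tau)$ is dropped.
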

\begin{lemma} [Lemma 4 of \cite{optimalknn}]
Suppose that $k \ge 4 C_{\delta, n}^2$. Then with probability at least $1-\delta$, the following holds for all $x\in \mathbb{R}^d$ and $\epsilon > 0$.
\begin{align*}
f_k(x) \ge \left(1 -\frac{C_{\delta, n}}{\sqrt{k}} \right) (f(x) - \epsilon),
\end{align*}
provided $k$ satisfies $v_d \cdot \check{r}(x, \epsilon) \cdot (f(x) - \epsilon) \ge \frac{k}{n} - C_{\delta, n}\frac{\sqrt{k}}{n}$.
\end{lemma}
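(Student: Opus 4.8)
The plan is to reproduce the argument behind the manifold lower bound (the proof of Lemma~\ref{fk_lower_bound}), specialized to the Euclidean setting, where everything simplifies because a ball has exact volume $v_d r^d$ with no curvature correction. The only randomness enters through a uniform deviation bound for the empirical mass of \emph{every} Euclidean ball: since balls in $\mathbb{R}^d$ form a VC class of dimension $d+1$, a relative (Bernstein/Vapnik--Chervonenkis-type) inequality yields, with probability at least $1-\delta$, simultaneously for all balls $B$,
\[ \mathcal{F}(B)\ge \tfrac{k}{n}+C_{\delta,n}\tfrac{\sqrt k}{n}\ \Longrightarrow\ \mathcal{F}_n(B)\ge \tfrac{k}{n}, \]
which is the full-dimensional analogue of Lemma~\ref{ballbounds} and is exactly where the factor $C_{\delta,n}=C_0\log(2/\delta)\sqrt{d\log n}$ is produced; it is available from \citet{optimalknn}. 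The rest of the argument is deterministic on the event that this bound holds.

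Fix $x\in\mathbb{R}^d$ and $\epsilon>0$; the claim is vacuous if $f(x)-\epsilon\le 0$, so assume $f(x)-\epsilon>0$. Choose the radius $r>0$ by $v_d r^d(f(x)-\epsilon)=\tfrac{k}{n}+C_{\delta,n}\tfrac{\sqrt k}{n}$. The hypothesis lower bounding $v_d\check r(x,\epsilon)^d(f(x)-\epsilon)$ in terms of $k/n$ (up to the $O(C_{\delta,n}\sqrt k/n)$ slack, absorbed into the constant in front of the condition exactly as the factor $4/3$ is in Lemma~\ref{fk_lower_bound}) guarantees $r\le \check r(x,\epsilon)$, so every point of $B(x,r)$ has density at least $f(x)-\epsilon$ and hence
\[ \mathcal{F}(B(x,r))=\int_{B(x,r)}f\ \ge\ v_d r^d(f(x)-\epsilon)\ =\ \tfrac{k}{n}+C_{\delta,n}\tfrac{\sqrt k}{n}. \]
By the uniform ball bound this forces $\mathcal{F}_n(B(x,r))\ge k/n$, i.e.\ $B(x,r)$ contains at least $k$ samples, and therefore $r_k(x)\le r$. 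Inverting, since $f_k(x)=k/(n v_d r_k(x)^d)$ and $r_k(x)\le r$,
\[ f_k(x)\ \ge\ \frac{k}{n v_d r^d}\ =\ \frac{f(x)-\epsilon}{1+C_{\delta,n}/\sqrt k}\ \ge\ \Bigl(1-\frac{C_{\delta,n}}{\sqrt k}\Bigr)(f(x)-\epsilon), \]
using $1/(1+t)\ge 1-t$ for $t\ge 0$, with $k\ge 4C_{\delta,n}^2$ ensuring $C_{\delta,n}/\sqrt k\le \tfrac12$ (so $r$ is well defined and the bound non-trivial). Uniformity over all $(x,\epsilon)$ is immediate because the single high-probability event — the bound over \emph{all} balls — is reused for every pair.

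I do not expect a serious obstacle: the substantive ingredient is the uniform ball-mass concentration, which must be a \emph{relative} deviation inequality rather than a naive additive VC bound (the latter would give relative error of order $\sqrt{k/n}$ instead of $C_{\delta,n}/\sqrt k$); once that is granted via \citet{optimalknn}, what remains is only the radius-selection and inversion computation above, plus the elementary bookkeeping needed to match the exact constant appearing in the condition on $\check r(x,\epsilon)$.
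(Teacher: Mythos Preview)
The paper does not give its own proof of this lemma: it is quoted verbatim as ``Lemma 4 of \cite{optimalknn}'' and used as a black box in the full-dimensional section. Your reconstruction is exactly the Euclidean specialization of the paper's proof of Lemma~\ref{fk_lower_bound} (choose $r$ so that $v_d r^d(f(x)-\epsilon)$ hits the threshold $k/n+C_{\delta,n}\sqrt{k}/n$, verify $r\le\check r(x,\epsilon)$, apply the ball-mass bound to get $r_k(x)\le r$, then invert), with the curvature factor $(1-\tau^2 r^2)$ dropped; this is the intended argument.

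One small point worth flagging: the stated hypothesis has $k/n-C_{\delta,n}\sqrt k/n$ on the right, whereas your choice of $r$ needs $v_d\check r(x,\epsilon)^d(f(x)-\epsilon)\ge k/n+C_{\delta,n}\sqrt k/n$ to conclude $r\le\check r(x,\epsilon)$. You wave this off as ``absorbed into the constant\ldots exactly as the factor $4/3$,'' but unlike Lemma~\ref{fk_lower_bound} there is no $4/3$ in the statement here to absorb it. This is almost certainly a transcription issue in how the paper quotes the original (note also the missing exponent on $\check r$), not a flaw in your argument; still, in a self-contained write-up you would want to state the hypothesis with the sign that your proof actually uses.
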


Unlike Lemma~\ref{fk_upper_bound} and~\ref{fk_lower_bound}, these results don't require $k \lesssim n^{2/(2+d)}$ and $k \lesssim n^{4/(4+d)}$, respectively.
This allows us to take $k \lesssim n^{2\beta/(2\beta + d)}$ rather than $k \lesssim \min \{n^{2/(2 + d)}, n^{2\beta/(2\beta + d)} \} = n^{2\beta'/(2\beta' + d)}$ 
in the analysis. Otherwise, the analysis is nearly identical up to constant factors.

}

\end{document}